\documentclass[11pt]{elsarticle}
\usepackage{geometry}
\usepackage{hanet}
\usepackage{url}
\usepackage{tikz}
\usepackage{amsfonts}
\usepackage{mathrsfs}
\usepackage{amssymb}
\usepackage{amsmath}
\usepackage{amsthm}
\usepackage{dsfont}
\usepackage{multirow}
\usepackage{booktabs}
\usepackage{tabularx}
\usepackage{graphicx}
\usepackage[linesnumbered,lined,boxed,commentsnumbered,ruled]{algorithm2e}
\usepackage{microtype}

\newcommand{\R}{\mathbb{R}}
\newcommand{\gph}{\mathcal{G}} 

\newcommand{\eigfm}[1][\ell]{\phi_{#1}}
\newcommand{\ws}[1][j]{\mathcal{S}^{(#1)}} 
\newcommand{\vj}[1][j]{v^{(#1)}} 
\newcommand{\wj}[1][j]{w^{(#1)}} 

\usepackage{xcolor}

\usepackage{hyperref}



\makeatletter
\def\ps@pprintTitle{%
 \let\@oddhead\@empty
 \let\@evenhead\@empty
 \def\@oddfoot{}%
 \let\@evenfoot\@oddfoot}
\makeatother











\begin{document}

\begin{frontmatter}

\title{Fast Haar Transforms for Graph Neural Networks\tnoteref{tdate}}
\tnotetext[tdate]{\today}
\author[addrzjnu,addrunsw]{Ming Li}
\ead{ming.li.ltu@gmail.com}

\author[addrprin]{Zheng Ma}
\ead{zhengm@princeton.edu}

\author[addrunsw]{Yu Guang Wang}
\ead{yuguang.wang@unsw.edu.au}

\author[addrcityu]{Xiaosheng Zhuang}
\ead{xzhuang7@cityu.edu.hk}

%
%
\address[addrzjnu]{Department of Educational Technology, Zhejiang Normal University, Jinhua, China}
\address[addrprin]{Department of Physics, Princeton University, New Jersey, USA}
\address[addrunsw]{School of Mathematics and Statistics,
The University of New South Wales, Sydney, Australia}
\address[addrcityu]{Department of Mathematics, City University of Hong Kong, Hong Kong}

\begin{abstract}
	Graph Neural Networks (GNNs) have become a topic of intense research recently due to their powerful capability in high-dimensional classification and regression tasks for graph-structured data. However, as GNNs typically define the graph convolution by the orthonormal basis for the graph Laplacian, they suffer from high computational cost when the graph size is large. This paper introduces Haar basis which is a sparse and localized orthonormal system for a coarse-grained chain on graph. The graph convolution under Haar basis, called Haar convolution, can be defined accordingly for GNNs. The sparsity and locality of the Haar basis allow Fast Haar Transforms (FHTs) on graph, by which a fast evaluation of Haar convolution between graph data and filters can be achieved. We conduct experiments on GNNs equipped with Haar convolution, which demonstrates state-of-the-art results on graph-based regression and node classification tasks.
\end{abstract}

\begin{keyword}
Graph Neural Networks\sep Haar Basis\sep Graph Convolution\sep Fast Haar Transforms\sep Geometric Deep Learning
\end{keyword}

\end{frontmatter}

\section{Introduction}
\label{intro}
Convolutional neural networks (CNNs) have been a very successful machinery in many high-dimensional regression and classification tasks on Euclidean domains \cite{KrSuHi2012,LeBeHi2015}. Recently, its generalization to non-Euclidean domains, known as \emph{geometric deep learning}, has attracted growing attention, due to its great potential in pattern recognition and regression for graph-structured data, see \cite{Bronstein_etal2017}.

Graph neural networks (GNNs) are a typical model in geometric deep learning, which replaces the partial derivatives in CNNs by the Laplacian operator \cite{BrZaSzLe2013,HeBrLe2015}.
The Laplacian, which carries the structural features of the data, is a second-order isotropic differential operator that admits a natural generalization to graphs and manifolds. In GNNs, input data are convoluted with filters  under an orthonormal system for the Laplacian. However, as the algebraic properties of regular Euclidean grids are lost in general manifolds and graphs, FFTs (fast Fourier transforms) for the Laplacian are not available. This leads to the issue that the computation of convolution for graph data is not always efficient, especially when the graph dataset is large.

In this paper, we introduce an alternative orthonormal system on
graph, the \emph{Haar basis}. It then defines a new graph convolution
for GNNs --- \emph{Haar convolution}. Due to the sparsity and locality of the Haar basis, fast Haar transforms (FHTs) can be achieved on graph-structured data. This  significantly improves the computational efficiency of GNNs as the Haar convolution guarantees the linear computational complexity. We apply Haar convolution to GNNs and give a novel type of deep convolutional neural networks on graph --- HANet. Numerical tests on real graph datasets show that HANet achieves good performance and computational efficiency in classification and regression tasks. To the best of our knowledge, our method is the first fast algorithm for spectral graph convolution by appropriately selecting orthogonal basis on graph, which is of great importance in the line of building spectral-based GNN models. Overall, the major contributions of the paper are summarized as three-fold.

\begin{itemize}
\item The Haar basis is introduced for graphs. Both theoretical analysis and real examples of the sparsity and locality are given. With these properties, the fast algorithms for Haar transforms (FHTs) are developed and their complexity analysis is studied.
\item The Haar convolution under Haar basis is developed. By virtue of FHTs, the computational cost for Haar convolution is proportional to the size of graph, which is more efficient than Laplacian-based spectral graph convolution. Other technical components, including weight sharing and detaching, chain and pooling, are also presented in details.
\item GNN with Haar convolution (named HANet) is proposed. The experiments illustrate that HANet with high efficiency achieves good performance on a broad range of high-dimensional regression and classification problems on graphs.
\end{itemize}

The paper is organized as follows. In Section~\ref{sec:relatedwork}, we review recent advances on GNNs. In Section~\ref{sec:gnn_with_haar}, we construct the Haar orthonormal basis using a chain on the graph. The Haar basis will be used to define a new graph convolution, called Haar convolution. In Section~\ref{sec:fastalgo}, we develop fast algorithms for Haar transforms and the fast Haar transforms allows fast computation of Haar convolution. In Section~\ref{sec:hanet}, we use the Haar convolution as the graph convolution in graph neural networks. Section~\ref{sec:experiments} shows the experimental results of GNNs with Haar convolution (HANet) on tasks of graph-based regression and node classification.

\section{Related Work}\label{sec:relatedwork}
Developing deep neural networks for graph-structured data has received extensive attention in recent years \cite{scarselli2009graph,LiTaBrZe2015,duvenaud2015convolutional,niepert2016learning,Survey_Battaglia,Survey_SunMS,Survey_ZhangCQ,
Survey_ZhuWW,nickel2015review,goyal2018graph,hamilton2017representation,chen2017graph,han2017laplacian,da2017tree,Scarselli_etal2009,wang2019haarpooling}.
Bruna et al. \cite{BrZaSzLe2013} first propose graph convolution, which is defined by graph Fourier transforms under the orthogonal basis from the graph Laplacian. The graph convolution uses Laplacian eigendecomposition which is computationally expensive. Defferrard et al. \cite{DeBrVa2016} approximate smooth filters in the spectral domain by Chebyshev polynomials. Kipf and Welling \cite{KiWe2017} simplify the convolutional layer by exploiting first-order Chebyshev polynomial for filters. Following this line, several acceleration methods for graph convolutional networks are proposed  \cite{ChZhSo2018,ChMaXi2018fastgcn}. Graph wavelet neural networks \cite{GWNN} replace graph Fourier transform by graph wavelet transform in the graph convolution, where Chebyshev polynomials are used to approximate the graph wavelet basis \cite{hammond2011wavelets}. Although GWNN circumvents the Laplacian eigendecomposition, the matrix inner-product operations are nevertheless not avoidable in wavelet transforms for convolution computation.

Graph convolutional networks with attention mechanisms \cite{Attention_GCN,velivckovic2017graph} can effectively learn the importance between nodes and their neighbors, which is more suitable for node classification task (than graph-based regression). But much computational and memory cost is required to perform the attention mechanism in the convolutional layers. Yang et al. \cite{Yang_etal2019} propose Shortest Path Graph Attention Network (SPAGAN) by using path-based attention mechanism in node-level aggregation, which leads to superior results than GAT \cite{velivckovic2017graph} concerning neighbor-based attention.

Some GNN models \cite{LNet,SGC,N-GCN2019} use multi-scale information and higher order adjacency matrix to define graph convolution. To increase the scalability of the model for large-scale graph, Hamilton et al. \cite{HaYiLe2017} propose the framework Graph-SAGE with sampling and a neural network based aggregator over a fixed size node neighbor. Artwood and Twosley develope diffusion convolutional neural networks \cite{DCNN_2016} by using diffusion operator for graph convolution. MoNet \cite{Monti_etal2017} introduces a general methodology to define spatial-based graph convolution by the weighted average of multiple weighting functions on neighborhood.
Gilmer et al. \cite{Gilmer_etal2017} provide a unified framework, the Message Passing Neural Networks (MPNNs), by which some existing GNN models are incorporated. Xu et al. \cite{GIN} present a theoretical analysis for the expressive power of GNNs and propose a simple but powerful variation of GNN, the graph isomorphism network. By generalizing the graph Laplacian to maximal entropy transition matrix derived from a path integral, \cite{MaLiWa2019} proposes a new framework called PAN that involves every path linking the message sender and receiver with learnable weights depending on the path length.


\section{Graph convolution with Haar basis}\label{sec:gnn_with_haar}
\subsection{Graph Fourier Transform}
Bruna et al. \cite{BrZaSzLe2013} first defined the graph convolution based on spectral graph theory \cite{ChGr1997} and the graph Laplacian. An un-directed weighted graph $\gph=(V,E,w)$ is a triplet  with vertices $V$, edges $E$ and weights $w:E\to \R$. Denote by $N:=|V|$ the number of vertices of the graph.
Let $l_2(\gph):=\{f:V\to \R \,|\, \sum_{v\in V}|f(v)|^2<\infty\}$ the real-valued $l_2$ space on the graph with inner product $f\cdot g:=\sum_{v\in V}f(v)g(v)$. A basis for $l_2(\gph)$ is a set of vectors $\{u_{\ell}\}_{\ell=1}^N$ on $\gph$ which are linearly independent and orthogonal (i.e. $u_{\ell}\cdot u_{\ell'}=0$ if $\ell\neq\ell'$). The (normalized) eigenvectors $\{u_{\ell}\}_{\ell=1}^{|V|}$ of the graph Laplacian $\mathcal{L}$ forms an orthonormal basis for $l_2(\gph)$. We call the matrix $U:=(u_{1},\dots,u_{N})$ the \emph{(graph Fourier)} base matrix, whose columns form the \emph{graph Fourier basis} for $l_2(\mathcal G)$. The \emph{graph convolution} can then be defined by
\begin{equation}\label{eq:graphconvL}
	g\star f = U \bigl((U^T g)\odot (U^T f)\bigr),
\end{equation}
where $U^T f$ is regarded as the \emph{adjoint discrete graph Fourier transform} of $f$, $U c$ is the \emph{forward discrete graph Fourier transform} of $c$ on $\gph$ and $\odot$ is the element-wise Hadamard product.

While graph convolution defined in \eqref{eq:graphconvL} is conceptually important, it has some limitations in practice. First, the base matrix $U$ is obtained by using eigendecomposition of the graph Laplacian in the sense that $\mathcal{L}=U \Lambda U^{T}$, where $\Lambda$ is the diagonal matrix of corresponding eigenvalues. The computational complexity is proportional to $\mathcal{O}(N^3)$, which is impractical when the number of vertices of the graph is quite large. Second, the computation of the forward and inverse graph Fourier transforms (i.e. $U^{T}f$ and $U c$) have $\mathcal{O}(N^2)$ computational cost due to the multiplication by (dense) matrices $U$ and $U^T$. In general, there is no fast algorithms for the graph Fourier transforms as the graph nodes are not regular and the matrix $U$ is not sparse. Third, filters in the spectral domain cannot guarantee the localization in the spatial (vertex) domain, and $O(Ndm)$ parameters need to be tuned in the convolutional layer with $m$ filters (hidden nodes) and $d$ features for each vertex.

To alleviate the cost of computing the graph Fourier transform, Chebyshev polynomials \cite{DeBrVa2016} are used to construct localized polynomial filters for graph convolution, where the resulting graph neural network is called ChebNet. Kipf and Welling \cite{KiWe2017} simplify ChebNet to obtain graph convolutional networks (GCNs). However, such a polynomial-based approximation strategy may lose information in the spectral graph convolutional layer, and matrix multiplication is still not avoidable as FFTs are not available for graph convolution. Thus, the graph convolution in this scenario is also computationally expensive, especially for dense graph of large size.
We propose an alternative orthonormal basis that allows fast computation for the corresponding graph convolution, which then improves the scalability and efficiency of existing graph models. The basis we use is the Haar basis on a graph. The Haar basis replaces the matrix of eigenvectors $U$ in \eqref{eq:graphconvL} and forms a highly sparse matrix, which reflects the clustering information of the graph. The sparsity of the Haar transform matrix allows fast computation (in nearly linear computational complexity) of the corresponding graph convolution.

\subsection{Haar Basis}\label{subsec:haarbasis}
Haar basis rooted in the theory of Haar wavelet basis as first introduced by Haar \cite{Haar1910}, is a special case of Daubechies wavelets \cite{Daubechies1992}, and later developed onto graph by Belkin et al. \cite{BeNiSi2006}, see also \cite{ChFiMh2015}. 
The construction of the Haar basis exploits a chain of the graph.
For a graph $\gph=(V,E,w)$, a graph $\gph^{\rm cg}:=(V^{\rm cg},E^{\rm cg},w^{\rm cg})$ is called a \emph{coarse-grained graph} of $\gph$ if $|V^{\rm cg}|\leq |V|$ and each vertex of $\gph$ associates with exactly one (parent) vertex in $\gph^{\rm cg}$. Each vertex of $\gph^{\rm cg}$ is called a \emph{cluster} of $\gph$.
Let  $J_0,J$ be two integers such that $J>J_0$. A \emph{coarse-grained chain} for $\gph$ is a set of graphs $\gph_{J\to J_0}:=(\gph_{J},\gph_{J-1},\dots,\gph_{J_0})$ such that $\gph_J=\gph$ and $\gph_{j}$ is a coarse-grained graph of $\gph_{j+1}$ for $j=J_0,J_0+1,\dots,J-1$.  $\gph_{J_0}$ is the \emph{top level} or the \emph{coarsest level} graph while $\gph_{J}$ is the \emph{bottom level} or the \emph{finest level} graph. If the top level $\gph_{J_0}$ of the chain has only one node, $\gph_{J\to J_0}$ becomes a tree.
The chain $\gph_{J\to J_0}$ gives a hierarchical partition for the graph $\gph$. For details about graphs and chains, see examples in \cite{ChGr1997,HaVaGr2011,ChFiMh2015,ChMhZh2018,WaZh2018,WaZh2019}.

\textbf{Construction of Haar basis.} With a chain of the graph, one can generate a Haar basis for $l_2(\gph)$ following \cite{ChFiMh2015}, see also \cite{GaNaCo2010}. We show the construction of Haar basis on $\gph$, as follows.

\textbf{Step~1.} Let $\gph^{\rm cg}=(V^{\rm cg},E^{\rm cg},w^{\rm cg})$ be a coarse-grained graph of $\gph=(V,E,w)$ with $N^{\rm cg}:=|V^{\rm cg}|$. Each vertex $v^{\rm cg}\in V^{\rm cg}$ is a cluster $v^{\rm cg}=\{v\in V\,|\,  v\mbox{ has parent } v^{\rm cg}\}$ of $\gph$. Order $V^{\rm cg}$, e.g., by degrees of vertices or weights of vertices, as $V^{\rm cg}=\{v^{\rm cg}_1,\ldots,v^{\rm cg}_{N^{\rm cg}}\}$. We define $N^{\rm cg}$ vectors $\eigfm^{\rm cg}$ on $\gph^{\rm cg}$ by
\begin{equation}\label{eq:haargc1}
\eigfm[1]^{\rm cg}(v^{\rm cg})  :=\frac{1}{\sqrt{N^{\rm cg}}},
\quad v^{\rm cg}\in V^{\rm cg},
\end{equation}
and for $\ell=2,\ldots,N^{\rm cg}$,  
\begin{equation}\label{eq:haargc2}
\eigfm[\ell]^{\rm cg}:=\sqrt{\frac{N^{\rm cg}-\ell+1}{N^{\rm cg}-\ell+2}}\left(\chi^{\rm cg}_{\ell-1}-\frac{\sum_{j=\ell}^{N^{\rm cg}}\chi_j^{\rm cg}}{{N^{\rm cg}-\ell+1}}\right),
\end{equation}
where  $\chi_j^{\rm cg}$ is the indicator function for the $j$th vertex $v_j^{\rm cg}\in V^{\rm cg}$ on $\gph$ given by
\[
\chi_j^{\rm cg}(v^{\rm cg}) :=
\begin{cases}
1, & v^{\rm cg} = v_j^{\rm cg},\\
0, & v^{\rm cg}\in V^{\rm cg}\backslash \{v_j^{\rm cg}\}.
\end{cases}
\]
Then, the set of functions $\{\phi_{\ell}^{\rm cg}\}_{\ell=1}^{N^{\rm cg}}$ forms an orthonormal basis for $l_2(\gph^{\rm cg})$.

Note that  each $v\in V$ belongs to exactly one cluster $v^{\rm cg}\in V^{\rm cg}$. In view of this, 
for each $\ell=1,\dots,N^{\rm cg}$, we extend the vector $\phi_\ell^{\rm cg}$ on $\gph^{\rm cg}$ to a vector $\eigfm[\ell,1]$ on $\gph$ by
\begin{equation*}
\eigfm[\ell,1](v):=
\frac{\eigfm^{\rm cg}(v^{\rm cg})}{\sqrt{|v^{\rm cg}|}}, \quad  v\in  v^{\rm cg},
\end{equation*}
here $|v^{\rm cg}|:=k_\ell$ is the size of the cluster $v^{\rm cg}$, i.e., the number of vertices in $\gph$ whose common parent is $v^{\rm cg}$.  We  order the cluster $v_\ell^{\rm cg}$,  e.g., by degrees of vertices, as
\[
v_\ell^{\rm cg} = \{v_{\ell,1},\ldots,v_{\ell,k_\ell}\}\subseteq V.
\]
For $k=2,\ldots,k_\ell$, similar to \eqref{eq:haargc2}, define
\begin{equation*}
\eigfm[\ell,k] =
 \sqrt{\frac{k_\ell-k+1}{k_\ell-k+2}}\left(\chi_{\ell,k-1}-\frac{\sum_{j=k}^{k_\ell}\chi_{\ell,j}}{k_\ell-k+1}\right).
\end{equation*}
where for $j=1,\dots,k_{\ell}$, $\chi_{\ell,j}$  is given by
\[
\chi_{\ell,j}(v) :=
\begin{cases}
1, & v = v_{\ell,j},\\
0, & v \in V\backslash \{v_{\ell,j}\}.
\end{cases}
\]
One can verify that the resulting $\{\phi_{\ell,k}: \ell=1,\dots,N^{\rm cg}, k=1,\dots,k_{\ell}\}$ is an orthonormal basis for $l_2(\gph)$.

\textbf{Step~2.} Let $\gph_{J\to J_0}$ be a coarse-grained chain for the graph $\gph$. An orthonormal basis $\{\phi_{\ell}^{(0)}\}_{\ell=1}^{N_0}$ for $l_2(\gph_{J_0})$ is generated using \eqref{eq:haargc1} and \eqref{eq:haargc2}. We then
repeatedly use Step~1:
for $j=J_0+1,\dots,J$, we generate an orthonormal basis $\{\phi_{\ell}^{(j)}\}_{\ell=1}^{N_j}$ for $l_2(\gph_j)$ from the orthonormal basis $\{\phi_{\ell}^{(j-1)}\}_{\ell=1}^{N_{j-1}}$ for the coarse-grained graph $\gph_{j-1}$ that was derived in the previous steps. We call the sequence $\{\phi_{\ell}:=\phi_{\ell}^{(J)}\}_{\ell=1}^N$ of  vectors at the finest level,  the \emph{Haar global orthonormal basis} or simply the \emph{Haar basis} for $\gph$ associated with the chain $\gph_{J\to J_0}$. The orthonormal basis $\{\phi_{\ell}^{(j)}\}_{\ell=1}^{N_j}$ for $l_2(\gph_j)$, $j=J-1,J-2,\dots,J_0$ is called the \emph{associated (orthonormal) basis} for the Haar basis $\{\phi_{\ell}\}_{\ell=1}^{N}$.

\begin{proposition}\label{prop:orthogonality}
For each level $j=J_0,\dots,J$, the sequence $\{\phi_{\ell}^{(j)}\}_{\ell=1}^{N_j}$ is an orthonormal basis for $l_2(\gph_j)$, and in particular, $\{\phi_{\ell}\}_{\ell=1}^N$ is an orthonormal basis for $l_2(\gph)$; each basis $\{\phi_{\ell}^{(j)}\}_{\ell=1}^{N_j}$ is the Haar basis for the chain $\gph_{j\to J_0}$.
\end{proposition}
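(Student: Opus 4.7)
The plan is to induct on the level index $j$, running from the coarsest $J_0$ up to the finest $J$. The base case $j=J_0$ is essentially the content of Step~1 applied to the single graph $\gph_{J_0}$: the $N_0$ vectors produced by \eqref{eq:haargc1}--\eqref{eq:haargc2} are to be shown to form an orthonormal basis for $l_2(\gph_{J_0})$. Since the number of vectors equals the dimension $N_0$, it suffices to verify orthonormality. Normalization of \eqref{eq:haargc2} is the routine identity $\tfrac{N_0-\ell+1}{N_0-\ell+2}\bigl(1+\tfrac{1}{N_0-\ell+1}\bigr)=1$ applied to the squared $l_2$ norm. Orthogonality between two such vectors $\phi_{\ell}^{(0)}$ and $\phi_{\ell'}^{(0)}$ with $\ell<\ell'$ follows because $\phi_{\ell'}^{(0)}$ is supported on $\{v_{\ell'-1}^{\rm cg},\dots,v_{N_0}^{\rm cg}\}$ while on that support $\phi_{\ell}^{(0)}$ is constant (equal to $-\sqrt{\tfrac{N_0-\ell+1}{N_0-\ell+2}}\cdot\tfrac{1}{N_0-\ell+1}$), and by direct computation $\phi_{\ell'}^{(0)}$ has zero total sum. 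Orthogonality to $\phi_1^{(0)}$ is the same observation since $\phi_1^{(0)}$ is constant on all of $V^{\rm cg}$.

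For the inductive step, assume $\{\phi_{\ell}^{(j-1)}\}_{\ell=1}^{N_{j-1}}$ is an orthonormal basis for $l_2(\gph_{j-1})$. Apply Step~1 with $\gph^{\rm cg}=\gph_{j-1}$ and $\gph=\gph_j$, producing vectors $\phi_{\ell,k}^{(j)}$ for $\ell=1,\dots,N_{j-1}$ and $k=1,\dots,k_\ell$, which I relabel as $\{\phi_{\ell}^{(j)}\}_{\ell=1}^{N_j}$ after noting that $\sum_\ell k_\ell=N_j$ gives the correct cardinality. I would then verify orthonormality by splitting into cases according to the supports. For $k,k'\ge 2$ with the same $\ell$, the vectors $\phi_{\ell,k}^{(j)}$ and $\phi_{\ell,k'}^{(j)}$ live inside the single cluster $v_\ell^{\rm cg}$ and satisfy the exact analog of \eqref{eq:haargc1}--\eqref{eq:haargc2} restricted to that cluster, so the base-case calculation applies verbatim. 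For $k,k'\ge 2$ with $\ell\neq\ell'$ the supports are disjoint clusters and the inner product vanishes. The interaction with the ``lifted'' vectors $\phi_{\ell,1}^{(j)}$ is handled by two observations: first, $\phi_{\ell,1}^{(j)}$ is constant on each cluster by construction, so its inner product against any $\phi_{\ell',k'}^{(j)}$ with $k'\ge 2$ reduces, via the constancy on $v_{\ell'}^{\rm cg}$, to a scalar multiple of $\sum_{v\in v_{\ell'}^{\rm cg}}\phi_{\ell',k'}^{(j)}(v)$, which is zero by the same zero-sum identity as in the base case; second, the inner product of two lifted vectors satisfies
\begin{equation*}
\langle \phi_{\ell,1}^{(j)},\phi_{\ell',1}^{(j)}\rangle_{l_2(\gph_j)}
=\sum_{v^{\rm cg}\in V^{\rm cg}}\!\!|v^{\rm cg}|\cdot\frac{\phi_\ell^{(j-1)}(v^{\rm cg})\phi_{\ell'}^{(j-1)}(v^{\rm cg})}{|v^{\rm cg}|}
=\langle\phi_\ell^{(j-1)},\phi_{\ell'}^{(j-1)}\rangle_{l_2(\gph_{j-1})},
\end{equation*}
which equals $\delta_{\ell\ell'}$ by the inductive hypothesis. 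Combined with the correct count, this proves that $\{\phi_{\ell}^{(j)}\}_{\ell=1}^{N_j}$ is an orthonormal basis for $l_2(\gph_j)$. Specializing to $j=J$ yields the ``in particular'' statement.

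The third claim, that each $\{\phi_\ell^{(j)}\}_{\ell=1}^{N_j}$ coincides with the Haar basis of the sub-chain $\gph_{j\to J_0}$, is immediate from the recursive definition: the construction of the Haar basis at level $J$ relative to $\gph_{J\to J_0}$ prescribes that one first builds the basis at level $J_0$ via \eqref{eq:haargc1}--\eqref{eq:haargc2} and then applies Step~1 iteratively, which is exactly the procedure that produced $\{\phi_\ell^{(j)}\}_{\ell=1}^{N_j}$ for the truncated chain. I expect the main obstacle to be bookkeeping: keeping the indices $(\ell,k)$ and the relabeling into $\{\phi_\ell^{(j)}\}_{\ell=1}^{N_j}$ consistent across levels, and carefully exploiting the support/constancy dichotomy to reduce cross-cluster orthogonality to the scalar orthonormality already known on $\gph_{j-1}$. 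Once this dichotomy is isolated, every case collapses either to a zero-sum identity inside a single cluster or to the inductive hypothesis on the coarser graph, so no new computation beyond the base case is required.
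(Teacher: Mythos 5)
Your proof is correct and complete. The paper actually states Proposition~\ref{prop:orthogonality} without proof (the construction in Section~\ref{subsec:haarbasis} only remarks that ``one can verify'' orthonormality at each stage), and your induction over levels --- base-case normalization and zero-sum/support arguments for \eqref{eq:haargc1}--\eqref{eq:haargc2}, the observation that lifting a vector from $\gph_{j-1}$ with the $1/\sqrt{|v^{\rm cg}|}$ normalization preserves inner products, and the dimension count $\sum_\ell k_\ell = N_j$ --- is exactly the verification the paper leaves implicit.
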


\begin{proposition}\label{prop:spochaar}
	Let $\gph_{J \to J_0}$ be a coarse-grained chain for $\gph$. If each parent of level $\gph_j$, $j=J-1,J-2,\dots,J_0$, contains at least two children, the number of different values of the Haar basis $\phi_{\ell}$, $\ell=1,\dots,N$, is bounded by a constant.
\end{proposition}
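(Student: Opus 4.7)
My plan is to trace how each basis vector $\phi_\ell$ on the finest graph $\gph_J=\gph$ arises from the iterative construction of Step~1 and Step~2. Each $\phi_\ell$ is the repeated extension of a ``seed'' function $\phi^{(j_\ell)}$ first introduced at a unique level $j_\ell\in\{J_0,\dots,J\}$: either (a)~the constant seed $\phi_1^{(J_0)}=1/\sqrt{N_{J_0}}$ on $\gph_{J_0}$, or (b)~a detail seed of the form $\phi_{\ell',k}^{(j_\ell)}$ with $k\geq 2$ given by the analogue of formula~\eqref{eq:haargc2} at level $j_\ell$. By direct inspection of \eqref{eq:haargc1} and \eqref{eq:haargc2}, a type-(a) seed takes a single value and a type-(b) seed takes exactly three values: one positive on a single vertex $v_{\ell',k-1}$, one negative on its $k_{\ell'}-k+1$siblings, and zero elsewhere.

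I would then push the seed down to level $J$ via the extension rule $\phi^{(j+1)}(v)=\phi^{(j)}(v^{(j)})/\sqrt{|v^{(j)}|}$, where $v^{(j)}$ denotes the parent of $v$ at level $j$ and $|v^{(j)}|$ its cluster size. Two features of this rule are critical: each extended function is constant on every cluster, and zero values remain zero while nonzero values are rescaled by products of square-root cluster-size factors along the path from $v$ up to its ancestor at level $j_\ell$. Hence the value of $\phi_\ell$ at any $v\in V$ is completely determined by (i)~which of the at most three seed values sits at the level-$j_\ell$ ancestor of $v$, and (ii)~the product of cluster sizes along the ancestor chain from level $j_\ell$ to $J$.

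Finally I would bound the distinct-value count coming from (i) and (ii). Part~(i) contributes at most three possibilities. For part~(ii), the hypothesis that every parent at levels $J_0,\dots,J-1$ has at least two children rules out degenerate single-child extensions that could multiply distinct values indiscriminately, and together with the finiteness of the chain this forces the number of distinct ancestor-size profiles to be bounded by a constant $C$ depending only on the chain $\gph_{J\to J_0}$. Combining~(i) and (ii) yields a uniform bound $3C$ on the number of different values taken by $\phi_\ell$, which is independent of $\ell$ and of $N$, as claimed.

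The step I expect to be the main obstacle is the quantitative control in part~(ii): sibling subtrees may have different shapes and therefore contribute different size products along ancestor paths, so one has to leverage the two-children hypothesis carefully to rule out pathological rescalings that inflate the count. My strategy is an induction on $j=j_\ell,\dots,J$ showing that at each extension step the number of distinct values grows by at most a bounded local factor determined by the branching at that level, which then telescopes into the desired uniform bound at the finest level.
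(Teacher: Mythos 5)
Your reduction is the right one, and its first two parts are sound: each $\phi_\ell$ is the repeated extension of a seed born at a level $j_\ell$, a detail seed of the form \eqref{eq:haargc2} takes at most three values (one positive, one negative, zero), and the extension rule multiplies the nonzero values by $\prod_n 1/\sqrt{|v^{(n)}|}$ along the ancestor path of $v$ down from level $j_\ell$. This factorization is essentially the identity $\phi_\ell(v^{(J)}_k)=W_k^{(j)}\phi_\ell^{(j)}(v^{(j)}_{k_j})$ that appears inside the proof of Theorem~\ref{thm:fdft}. (For the record, the paper states this proposition without proof, so there is no official argument to compare against.)

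The genuine gap is your part (ii), and it is not merely a technical obstacle: the hypothesis that every parent has at least two children does not control the number of distinct ancestor-size products. It only bounds the depth, $J-J_0\leq\log_2 N$; it says nothing about the variability of cluster sizes across siblings, which is exactly what creates new values at each extension step. Concretely, take a three-level chain in which $\gph_{J_0}$ has one node, $\gph_{J_0+1}$ has $k$ nodes, and the $i$th of these has $i+1$ children, so that $N\approx k^2/2$ and every parent has at least two children; then the extended constant vector satisfies $\phi_1(v)=\frac{1}{\sqrt{k}}\cdot\frac{1}{\sqrt{i+1}}$ for $v$ in the $i$th cluster, which already takes $k=\Theta(\sqrt{N})$ distinct values. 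Hence the bound ``a constant $C$ depending only on the chain'' that you invoke is vacuous (any vector on a finite graph takes finitely many values, and the complexity analysis needs $C$ independent of $N$), and your proposed induction does not close the gap: the ``bounded local factor'' at level $n$ is the number of distinct cluster sizes occurring there, which need not be bounded, and even a factor of $2$ per level telescopes to $2^{J-J_0}$, i.e.\ up to order $N$. What is true, immediate from \eqref{eq:haargc1}--\eqref{eq:haargc2}, and what the complexity analysis of Algorithm~\ref{alg:adft} actually uses, is that $\phi_\ell^{(j)}$ \emph{at its birth level} $j$ takes at most three distinct values; the finest-level claim you are proving requires an additional hypothesis (for instance, that all clusters at a given level have equal, or at least uniformly bounded, size) beyond the two-children condition.
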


\begin{figure}[t]
\begin{center}
\begin{minipage}{\columnwidth}
\centering
\begin{minipage}{0.4\columnwidth}
	\includegraphics[width=0.88\columnwidth]{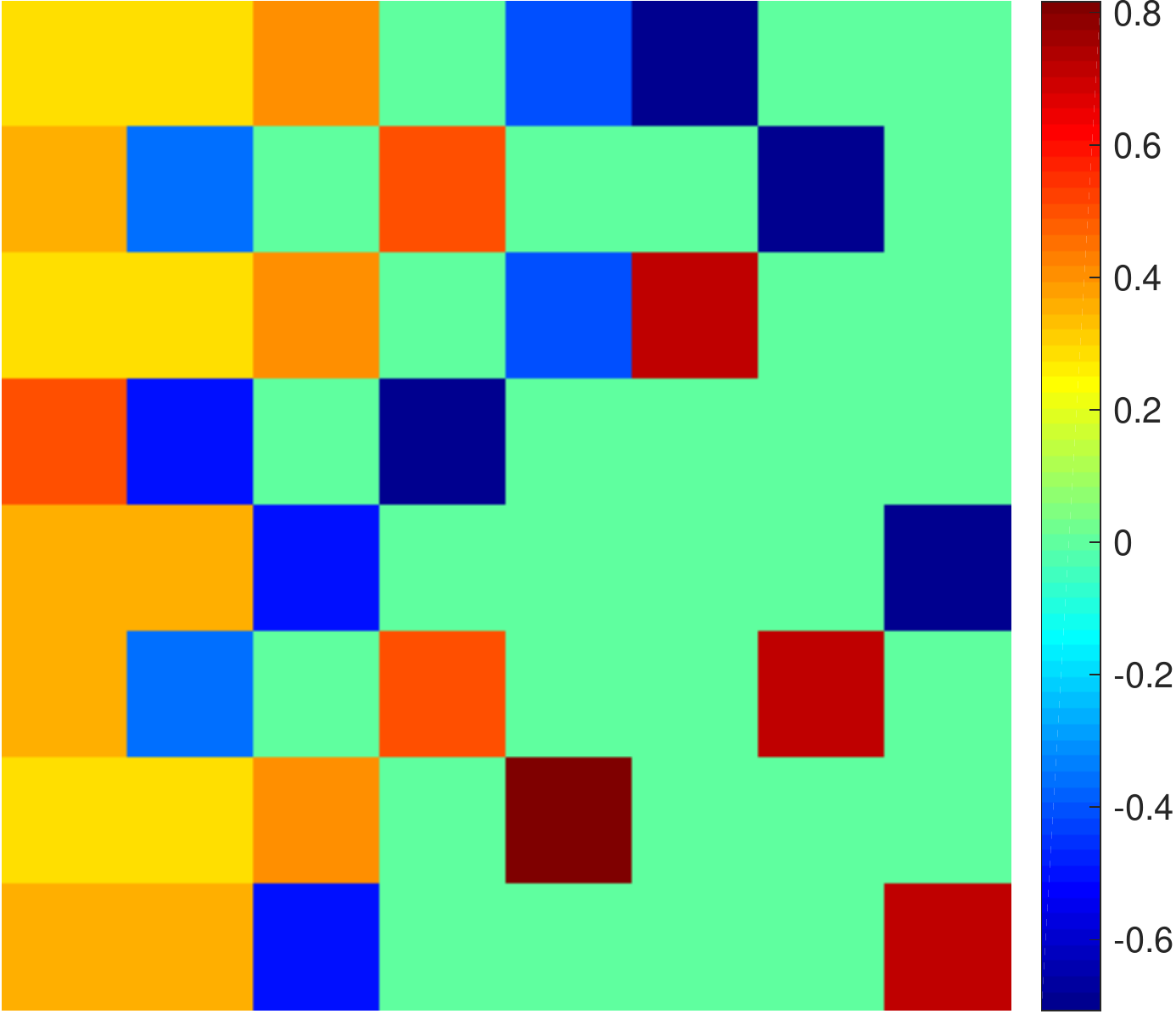}\\
	\centering
	{\scriptsize (a) Haar transform matrix}
\end{minipage}
\hskip -0.1in
\begin{minipage}{0.56\columnwidth}
	\includegraphics[width=1\columnwidth]{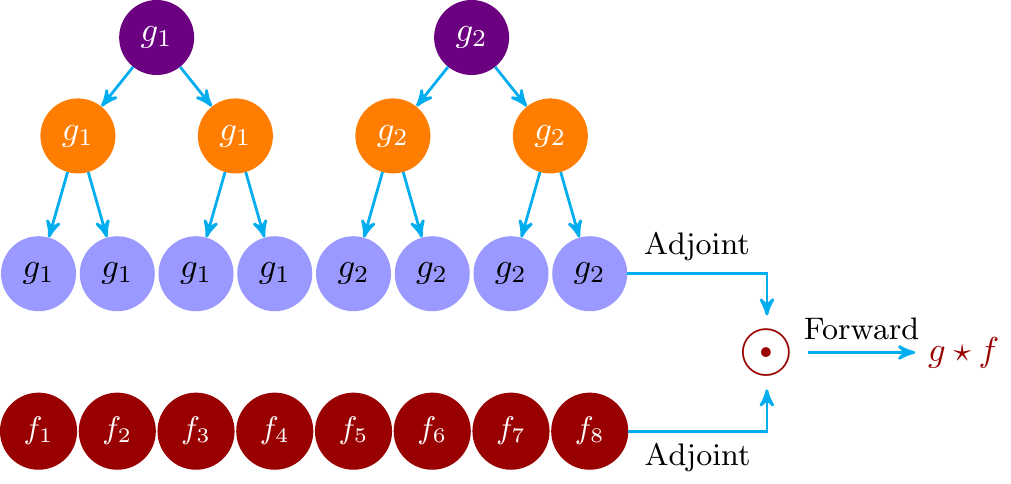}\\[5mm]
\centering	
	{\scriptsize (b) Haar convolution}
\end{minipage}
\end{minipage}
\vskip -0.015in
\caption{(a) The $8\times 8$ matrix $\Phi$ of the \emph{Haar Basis} for a graph $\gph$ with $8$ nodes. The green entries are zero and the matrix $\Phi$ is sparse. The Haar basis is created based on the coarse-grained chain $\gph_{2\to 0}:=(\gph_2,\gph_1,\gph_0)$, where $\gph_2,\gph_1,\gph_0$ are graphs with $8,4,2$ nodes. For $j=1,2$, each node of $\gph_{j-1}$ is a cluster of nodes in $\gph_{j}$. Each column of $\Phi$ is a member of the Haar basis. The first two columns can be compressed as an orthonormal basis of $\gph_0$ and the first to fourth columns can be reduced to the orthonormal basis for $\gph_1$. (b) \emph{Haar Convolution} $g\star f$ using the Haar basis of (a), where the weight sharing for filter vector $g$ is defined by the chain $\gph_{2\to0}$ and the $g\star f$ is the forward Haar transform of the point-wise product of the adjoint Haar transforms of $g$ and $f$, where the Haar transforms have a fast algorithmic implementation.}
\label{fig:haarbasis1}
\end{center}
\vskip -0.23in
\end{figure}

The Haar basis depends on the chain for the graph. If the topology of the graph is well reflected by the clustering of the chain, the Haar basis then contains the crucial geometric information of the graph. For example, by using $k$-means clustering algorithm \cite{Lloyd1982} or METIS algorithm \cite{KaKu1998} one can generate a chain that reveals desired geometric properties of the graph.

Figure~\ref{fig:haarbasis1}b shows a chain $\gph_{2\to 0}$ with $3$ levels of a graph $\gph$. Here, for each level, the vertices are given by
\begin{align*}
	V^{(2)}&=V= \{v_1,\dots,v_8\},\\
	V^{(1)}&= \{v_1^{(1)},v_2^{(1)},v_3^{(1)},v_4^{(1)}\}\\
	&=\{\{v_1,v_2\},\{v_3,v_4\},\{v_5,v_6\},\{v_7,v_8\}\},\\
	V^{(0)}&= \{v_1^{(0)},v_2^{(0)}\}
	=\{\{v_1^{(1)},v_2^{(1)}\},\{v_3^{(1)},v_4^{(1)}\}\}.
\end{align*}
Figure~\ref{fig:haarbasis1}a shows the Haar basis for the chain $\gph_{2\to 0}$. There are in total $8$ vectors of the Haar basis for $\gph$. From construction, the Haar basis $\eigfm$ and the associated basis $\eigfm^{(j)}$, $j=1,2$ are closely connected: the $\eigfm[1],\eigfm[2]$ can be reduced to $\eigfm[1]^{(0)},\eigfm[2]^{(0)}$ and the $\eigfm[1],\eigfm[2],\eigfm[3],\eigfm[4]$ can be reduced to $\eigfm[1]^{(1)},\eigfm[2]^{(1)},\eigfm[3]^{(1)},\eigfm[4]^{(1)}$. This connection would allow fast algorithms for Haar transforms as given in Algorithms~\ref{alg:adft} and \ref{alg:dft}.
In Figure~\ref{fig:haarbasis1}, the matrix $\Phi^T$ of the $8$ Haar basis vectors $\eigfm$ on $\gph$ has good sparsity. With the increase of the graph size, the sparsity of the Haar transform matrix $\Phi$ becomes more prominent, which we will demonstrate in the experiments in Section~\ref{subsec:experimentFHT}.

\subsection{Haar Convolution}\label{sec:haarconv}
With the Haar basis constructed in Section~\ref{subsec:haarbasis}, we can define Haar convolution as an alternative form of spectral graph convolution in \eqref{eq:graphconvL}.
Let $\{\phi_{\ell}\}_{\ell=1}^{N}$ be the Haar basis associated with a chain $\gph_{J\to J_0}$ of a graph $\gph$.
Denoted by $\Phi=(\phi_1,\dots,\phi_N)\in \R^{N\times N}$ the Haar transform matrix. We define by
\begin{equation}\label{eq:adft}
	\Phi^T f = \left(\sum_{v\in V}\phi_1(v)f(v),\dots,\sum_{v\in V}\phi_N(v)f(v)\right)\in\R^N
\end{equation}
the \emph{adjoint Haar transform} for graph data $f$ on $\gph$, and by
\begin{equation}\label{eq:dft}
	(\Phi c)(v) = \sum_{\ell=1}^N\phi_{\ell}(v) c_{\ell},\quad v\in V,
\end{equation}
the \emph{forward Haar transform} for (coefficients) vector $c:=(c_1,\dots,c_N)\in \R^N$. We call the matrix $\Phi$ \emph{Haar transform matrix}.

\begin{definition}
	The Haar convolution for filter $g$ and graph data $f$ on $\gph$ can be defined as
\begin{equation}\label{defn:graphconv}
	g\star f = \Phi((\Phi^T g)\odot (\Phi^T f)).
\end{equation}
\end{definition}
Computationally, \eqref{defn:graphconv} is obtained by performing forward Haar transform of the element-wise Hadamard product between adjoint Haar transform of $g$ and $f$.
Compared with the Laplacian based spectral graph convolution given in \eqref{eq:graphconvL}, the Haar convolution has the following  features.
(i) the Haar transform matrix $\Phi$ is sparse and the computation of $\Phi^T f$ or $\Phi c$ is more efficient than
$U^T f$ or $Uc$;
(ii) as the Haar basis is constructed based on the chain of the graph which reflects the clustering property for vertices, the Haar convolution can extract abstract features for input graph data, that is, it provides a learning representation for graph-structured data;
(iii) by means of the sparsity of Haar basis, the adjoint and forward Haar transforms can be implemented by fast algorithms, which have nearly linear computational complexity (with respect to the size of the input graph).

We can presume the filter in the ``frequency domain'' and skip adjoint Haar transform of filter $g$ (i.e. $\Phi^T g$), and then write Haar convolution as $g\star f = \Phi(g\odot (\Phi^T f))$.

\subsection{Fast Algorithms for Haar Transforms and Haar Convolution} The computation of Haar transforms can also be accelerated by using sparse matrix multiplications due to the sparsity of the Haar transform matrix. This would allow the linear computational complexity $O(\epsilon N)$ with sparsity $1-\epsilon$ of the Haar transform matrix. Moreover, a similar computational strategy to the sparse Fourier transforms \cite{HaInKaPr2012simple,InKaPr2014} can be applied so that the Haar transforms achieve faster implementation with time complexity $O(k\log N)$ for graph with $N$ nodes and the Haar transform matrix with $k$ non-zero elements.
By the sparsity of Haar transform matrix, fast Haar transforms (FHTs) which includes \emph{adjoint Haar transform} and \emph{forward Haar transform} can be developed to speed up the implementation of Haar convolution. Theorems~\ref{thm:adft} and \ref{thm:fdft} in the following section show that the computational cost of adjoint and forward Haar transform can reach $\mathcal{O}(N)$ and $\mathcal{O}(N(\log N)^2)$. They are nearly linear computational complexity and are thus called \emph{fast Haar transforms} (FHTs).
The Haar convolution in \eqref{defn:graphconv} consists of two adjoint Haar transforms and a forward Haar transform, and can then be evaluated in $\mathcal{O}(N(\log N)^2)$ steps.

\subsection{Weight Sharing}
We can use weight sharing in Haar convolution to reduce the number of parameters of the filter, and capture the common feature of the nodes which lie in the same cluster. As the resulting clusters contain information of neighbourhood, we can use the chain $\gph_{J\to J_0}$ for weight sharing: the vertices of the graph which have the same parent at a coarser level share a parameter of the filter. Here, the coarser level is some fixed level $J_1$, $J_0\leq J_1<J$. For example, the weight sharing rule for chain $\gph_{2\to0}$ in Figure~\ref{fig:haarbasis1}b is: assign the weight $g_i$ for each node $v_i^{(0)}$, $i=1,2$ on the top level, the filter (or the weight vector) at the bottom level is then $g=(g_1,g_1,g_1,g_1,g_2,g_2,g_2,g_2)$.
In this way, one has used the filter $g$ with two independent parameters $g_1, g_2$ to convolute with the input vector with $8$ components.

\section{Fast algorithms under Haar basis}\label{sec:fastalgo}
For the Haar convolution introduced in Definition 3 (see Eq. \ref{defn:graphconv}), we can develop an efficient computational strategy by virtue of the sparsity of the Haar transform matrix.
Let $\gph_{J\to J_0}$ be a coarse-grained chain of the graph $\gph$. For convenience, we label the vertices of the level-$j$ graph $\gph_j$ by $V_j:=\bigl\{\vj_1,\ldots,\vj_{N_j}\bigr\}$.

\subsection{Fast Computation for Adjoint Haar Transform $\Phi^{T}f$}
 The adjoint Haar transform in \eqref{eq:adft} can be computed in the following way.
For $j=J_0,\dots,J-1$, let $c_{k}^{(j)}$ be the number of children of $\vj_k$, i.e. the number of vertices of $\gph_{j+1}$ which belongs to the cluster $\vj_k$, for $k=1,\ldots,N_j$. For $j=J$, let $c_{k}^{(J)}\equiv1$ for $k=1,\dots,N$. For $j=J_0,\dots,J$ and $k=1,\dots,N_j$, we define the weight factor for $\vj_k$ by
\begin{equation}\label{eq:wjk}
	\wj_k:=\frac{1}{\sqrt{c_{k}^{(j)}}}.
\end{equation}
Let $W_{J\to J_0}:=\{\wj_k\, | \, j=J_0,\dots,J,\: k=1,\dots,N_j\}$.
Then, the weighted chain $(\gph_{J\to J_0},W_{J\to J_0})$ is a \emph{filtration} if each parent in the chain $\gph_{J\to J_0}$ has at least two children. See e.g. \cite[Definition~2.3]{ChFiMh2015}.

Let $\{\phi_{\ell}\}_{\ell=1}^{N}$ be the Haar basis obtained in Step 2 of Section~\ref{subsec:haarbasis}, which we also call the Haar basis for the filtration $(\gph_{J\to J_0},W_{J\to J_0})$ of a graph $\gph$. We define the weighted sum for $f\in l_2(\gph)$ by
\begin{equation}\label{eq:ws1}
	\ws[J]\bigl(f,\vj[J]_k\bigr):= f(\vj[J]_k), \quad \vj[J]_k\in \gph_J,
\end{equation}
and for $j=J_0,\dots,J-1$ and $\vj[j]_k\in \gph_j$,
\begin{equation}\label{eq:wsj}
	\ws\bigl(f,\vj_k\bigr):= \sum_{\vj[j+1]_{k'}\in \vj_k}\wj[j+1]_{k'} \ws[j+1]\bigl(f,\vj[j+1]_{k'}\bigr).
\end{equation}
For each vertex $\vj_k$ of $\gph_j$, the $\ws\bigl(f,\vj_k\bigr)$ is the weighted sum of the $\ws[j+1]\bigl(f,\vj[j+1]_{k'}\bigr)$ at the level $j+1$ for those vertices $\vj[j+1]_{k'}$ of $\gph_{j+1}$ whose parent is $\vj_k$.

The adjoint Haar transform can be evaluated by the following theorem.
\begin{theorem}\label{thm:adft} Let $\{\eigfm\}_{\ell=1}^{N}$ be the Haar basis for the filtration $(\gph_{J\to J_0},W_{J\to J_0})$ of a graph $\gph$. Then, the adjoint Haar transform for the vector $f$ on the graph $\gph$ can be computed by, for $\ell=1,\dots,N$,
\begin{equation}\label{eq:adftbyws}
	(\Phi^T f)_{\ell} = \sum_{k=1}^{N_j} \ws\bigl(f,\vj_k\bigr)\wj_k \eigfm^{(j)}(\vj_k),
\end{equation}
where $j$ is the smallest possible number in $\{J_0,\ldots,J\}$ such that  $\eigfm^{(j)}$ is the $\ell$th member of the orthonormal basis $\{\eigfm^{(j)}\}_{\ell=1}^{N_j}$ for $l_2(\gph_j)$ associated with the Haar basis $\{\eigfm\}_{\ell=1}^N$ (see Section~\ref{subsec:haarbasis}), $\vj_k$ are the vertices of $\gph_j$ and weights $\wj_k$ are given by \eqref{eq:wjk}.
\end{theorem}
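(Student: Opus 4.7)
The plan is to exploit the explicit way $\eigfm$ is assembled from its ``birth-level'' ancestor $\eigfm^{(j)}$. By Step~1 of Section~\ref{subsec:haarbasis}, the extension from level $i$ to level $i+1$ converts $\eigfm^{(i)}$ into $\eigfm^{(i+1)}$ via $\eigfm^{(i+1)}(v)=\eigfm^{(i)}(v^{\mathrm{par}})/\sqrt{|v^{\mathrm{par}}|}$, where $v^{\mathrm{par}}$ is the parent of $v$ at level $i$; writing $v^{\mathrm{par}}=\vj[i]_{k_i(v)}$ and using $|v^{\mathrm{par}}|=c_{k_i(v)}^{(i)}$, this amounts to multiplication by $\wj[i]_{k_i(v)}$. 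Iterating this from the birth level $j$ down to the finest level $J$ then yields, for every $v\in V_J$,
\begin{equation*}
\eigfm(v)=\eigfm^{(j)}\bigl(\vj[j]_{k_j(v)}\bigr)\prod_{i=j}^{J-1}\wj[i]_{k_i(v)},
\end{equation*}
where $k_i(v)$ denotes the index of the ancestor of $v$ at level $i$.

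Next, I would substitute this identity into $(\Phi^T f)_\ell=\sum_{v\in V}\eigfm(v)f(v)$ and partition the sum according to the level-$j$ ancestor. This extracts $\eigfm^{(j)}(\vj[j]_k)\wj[j]_k$ as a common factor on each cluster and leaves
\begin{equation*}
(\Phi^T f)_\ell=\sum_{k=1}^{N_j}\eigfm^{(j)}\bigl(\vj[j]_k\bigr)\wj[j]_k\sum_{v\in \vj[j]_k}\prod_{i=j+1}^{J-1}\wj[i]_{k_i(v)}f(v).
\end{equation*}
Matching \eqref{eq:adftbyws} then reduces to showing that the inner sum equals $\ws[j](f,\vj[j]_k)$.

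That identification I would prove by downward induction on the level, using the defining recursion \eqref{eq:ws1}--\eqref{eq:wsj}. The base case $j=J$ is immediate: each level-$J$ cluster is a singleton, the product is empty, and both sides reduce to $f(\vj[J]_k)$. For the induction step, I would regroup the inner sum at level $j$ according to the level-$(j+1)$ ancestor $\vj[j+1]_{k'}$ of $v$, factor out $\wj[j+1]_{k'}$, apply the inductive hypothesis to the resulting inner sum over $v\in \vj[j+1]_{k'}$, and recognise the outcome as the recursion \eqref{eq:wsj} for $\ws[j](f,\vj[j]_k)$. The main obstacle I anticipate is purely notational rather than analytical: I must keep the level indices on $\wj[i]_{k_i(v)}$ correctly aligned with the ancestor indices $k_i(v)$ so that exactly the weight $\wj[j]_k$ generated by the first extension step stays next to $\eigfm^{(j)}(\vj[j]_k)$ in \eqref{eq:adftbyws}, while the remaining factors $\wj[i]_{k_i(v)}$ for $i\geq j+1$ (together with the convention $\wj[J]\equiv 1$ at the finest level) assemble into $\ws[j](f,\vj[j]_k)$.
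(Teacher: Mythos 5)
Your proposal is correct and is essentially the paper's own argument: both rest on the one-step relation $\eigfm^{(i+1)}(v)=\wj[i]_{k_i(v)}\eigfm^{(i)}(v^{\mathrm{par}})$ and the recursion \eqref{eq:ws1}--\eqref{eq:wsj} for $\ws$, the only difference being that the paper telescopes level by level inside the sum while you first unroll the relation into the explicit product formula for $\eigfm(v)$ (the identity the paper records as \eqref{eq:v1byvj} in the forward-transform proof) and then identify the inner sum with $\ws(f,\vj_k)$ by downward induction. Your handling of the index alignment, including the convention $\wj[J]\equiv 1$, matches the paper's conventions, so no gap remains.
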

\begin{proof}[Proof]
	By the relation between $\eigfm$ and $\eigfm^{(j)}$,
	\begin{align*}
		(\Phi^T f)_{\ell} &= \sum_{k=1}^{N}f(\vj[J]_k)\eigfm(\vj[J]_k)\\
		&= \sum_{k'=1}^{N_{J-1}} \left(\sum_{\vj[J]_k\in \vj[J-1]_{k'}}f(\vj[J]_k)\right)\wj[J-1]_{k'}\eigfm^{(J-1)}(\vj[J-1]_{k'})\\
		&=\sum_{k'=1}^{N_{J-1}} \ws[J-1](f,\vj[J-1]_{k'})\wj[J-1]_{k'}\eigfm^{(J-1)}(\vj[J-1]_{k'})\\
		&=\sum_{k''=1}^{N_{J-2}} \left(\sum_{\vj[J-1]_{k'}\in\vj[J-2]_{k''}}\ws[J-1](f,\vj[J-1]_{k'})\wj[J-1]_{k'}\right)\\
		&\quad \times\wj[J-2]_{k''}\eigfm^{(J-2)}(\vj[J-2]_{k''})\\
		&=\sum_{k''=1}^{N_{J-2}} \ws[J-2](f,\vj[J-2]_{k''})\wj[J-2]_{k''}\eigfm^{(J-2)}(\vj[J-2]_{k''})\\
		&\cdots\\
		&=\sum_{k=1}^{N_j} \ws(f,\vj_k)\wj_k\eigfm^{(j)}(\vj_k),
	\end{align*}
	where  we recursively compute the summation to obtain the last equality,
	thus completing the proof.
\end{proof}

\subsection{Fast Computation for Forward Haar Transform $\Phi c$} The forward Haar transform in \eqref{eq:dft} can be computed, as follows.
\begin{theorem}\label{thm:fdft}
Let $\{\eigfm\}_{\ell=1}^{N}$ be the Haar basis for a filtration ($\gph_{J\to J_0}$,$W_{J\to J_0}$) of graph $\gph$ and $\{\eigfm^{(j)}\}_{\ell=1}^{N_j}$, $j=J_0,\dots,J$ be the associated bases at $\gph_j$. Then, the forward Haar transform for vector $c=(c_1,\dots,c_N)\in\R^N$ can be computed by, for $k=1,\dots,N$,
\begin{equation*}
(\Phi c)_k =\sum_{j=1}^{J} W_{k}^{(j)}\left(\sum_{\ell=N_{j-1}+1}^{N_j}  c_\ell \eigfm^{(j)}(\vj_{k_j})\right),
\end{equation*}
where for $k=1,\dots,N$, $\vj_{k_j}$ is the parent (ancestor) of $\vj[J]_k$ at level $j$, and $W_{k}^{(J)} := 1$ and
\begin{equation}\label{eq:Wkj}
	W_{k}^{(j)} := \prod_{n=2}^{j}\wj[n]_{k_n} \;\mbox{~for~} j=J_0,\dots,J-1,
\end{equation}
where the weight factors $\wj[n]_{k_n}$ for $n=1,\dots,J$ are given by \eqref{eq:wjk}.
\end{theorem}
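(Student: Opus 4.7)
The plan is to reduce the identity to a level-by-level telescoping of how each Haar basis vector $\eigfm$ at the finest level $\gph_J$ restricts to its ancestor at a coarser level. The construction in Section~\ref{subsec:haarbasis} introduces basis vectors by birth level: indices $\ell=1,\dots,N_{J_0}$ correspond to the top-level basis on $\gph_{J_0}$ (extended to $\gph_J$), while for each $j=J_0+1,\dots,J$ the indices $\ell=N_{j-1}+1,\dots,N_j$ correspond to the new basis vectors produced at level $j$ by \eqref{eq:haargc2}. Accordingly, I would split $(\Phi c)_k=\sum_{\ell=1}^{N}c_\ell\eigfm(\vj[J]_k)$ according to the birth level of $\ell$, with the convention $N_{J_0-1}:=0$, so that the outer sum $\sum_{j=J_0}^{J}$ in the claimed formula appears naturally.

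The first key ingredient is the single-step extension identity: for every $\ell\leq N_{j-1}$ and every $\vj_k\in V_j$ whose parent in $\gph_{j-1}$ is $\vj[j-1]_{k_{j-1}}$,
\begin{equation*}
\eigfm^{(j)}(\vj_k)=\wj[j-1]_{k_{j-1}}\,\eigfm^{(j-1)}(\vj[j-1]_{k_{j-1}}).
\end{equation*}
This is an immediate consequence of the extension rule $\phi_{\ell,1}(v)=\phi_{\ell}^{\rm cg}(v^{\rm cg})/\sqrt{|v^{\rm cg}|}$ from Step~1 of the construction, combined with the definition $\wj_k=1/\sqrt{c_k^{(j)}}$ in \eqref{eq:wjk} and the fact that $c_{k_{j-1}}^{(j-1)}=|\vj[j-1]_{k_{j-1}}|$ is the cluster size.

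Iterating this identity along the ancestral chain $\vj[J]_k\to\vj[J-1]_{k_{J-1}}\to\cdots\to\vj_{k_j}$ of $\vj[J]_k$ yields, for any $\ell\leq N_j$,
\begin{equation*}
\eigfm(\vj[J]_k)=\Bigl(\prod_{n=j}^{J-1}\wj[n]_{k_n}\Bigr)\eigfm^{(j)}(\vj_{k_j})=W_k^{(j)}\,\eigfm^{(j)}(\vj_{k_j}),
\end{equation*}
matching the definition of $W_k^{(j)}$ in \eqref{eq:Wkj}, with the empty product at $j=J$ recovering $W_k^{(J)}=1$. Substituting this telescoped identity into the birth-level decomposition of $(\Phi c)_k$ and pulling the common factor $W_k^{(j)}$ out of the inner sum over $\ell\in\{N_{j-1}+1,\dots,N_j\}$ produces precisely the claimed formula.

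I expect the main obstacle to be purely bookkeeping: one must carefully track the ancestral indexing $k\mapsto(k_{J_0},\dots,k_{J-1},k_J=k)$ and ensure that the weight factor introduced in passing from level $n-1$ to level $n$ is attached to the parent $\vj[n-1]_{k_{n-1}}$ rather than to its child $\vj[n]_{k_n}$. Once this indexing convention is set up consistently, the telescoping is routine and there is no further analytic content beyond the single-step extension identity above.
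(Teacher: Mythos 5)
Your proposal is correct and follows essentially the same route as the paper's proof: split $(\Phi c)_k$ by the birth level of $\ell$ (with $N_{J_0-1}:=0$), telescope the single-step extension identity along the ancestral chain to get $\eigfm(\vj[J]_k)=W_k^{(j)}\eigfm^{(j)}(\vj_{k_j})$, and factor $W_k^{(j)}$ out of the inner sum. Your explicit accumulated product $\prod_{n=j}^{J-1}\wj[n]_{k_n}$ (weights attached to the ancestors at levels $j,\dots,J-1$) is in fact the correct reading of the recursion and cleans up the inconsistent index ranges that appear in the paper's own displays for $W_k^{(j)}$.
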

\begin{proof}[Proof]
Let $N_j:=|V_j|$ for $j=J_0,\ldots,J$ and $N_{J_0-1}:=0$.
For $k=1,\dots,N_J$, let $\vj[J]_k$ the $k$th vertex of $\gph_J$. For $i=J_0,\dots,J-1$, there exists $k_i=1,\dots,N_j$ such that $\vj[i]_{k_i}$ the parent at level $i$ of $\vj[J]_k$. By the property of the Haar basis, for each vector $\eigfm$ there exists $j\in\{J_0,\dots,J\}$ such that $\ell\in\{N_{j-1}+1,\dots,N_j\}$, $\eigfm$ is a constant for the vertices of $\gph_J=\gph$ which have the same parent at level $j$. Then,
\begin{align}\label{eq:v1byvj}
 \eigfm(\vj[J]_k)
 &= \wj[J-1]_{k_{J-1}} \eigfm^{(J-1)}(\vj[J-1]_{k_{J-1}})\notag\\
 &= \wj[J-1]_{k_{J-1}}\wj[J-2]_{k_{J-2}} \eigfm^{(J-2)}(\vj[J-2]_{k_{J-2}})\notag\\
 &= \left(\prod_{n=J_0}^{j}\wj[n]_{k_n}\right)\eigfm^{(j)}(\vj_{k_j})\notag\\
 &= W_{k}^{(j)} \eigfm^{(j)}(\vj_{k_j}).
\end{align}
where the product of the weights in the third equality only depends upon the level $j$ and the vertex $\vj[1]_k$, and we have let
\begin{equation*}
	W_{k}^{(j)} := \prod_{n=1}^{j}\wj[n]_{k_n}
\end{equation*}
in the last equality.
By \eqref{eq:v1byvj},
\[
\begin{aligned}
\Phi(c,\vj[J]_k) &= \sum_{\ell=1}^{N} c_\ell \eigfm(\vj[J]_k)=\sum_{j=J_0}^{J} \sum_{\ell=N_{j-1}+1}^{N_j} c_\ell \eigfm(\vj[J]_k)\\
&=\sum_{j=J_0}^{J} \sum_{\ell=N_{j-1}+1}^{N_j}  c_\ell W_{k}^{(j)} \eigfm^{(j)}(\vj_{k_j})\\
&=\sum_{j=J_0}^{J} W_{k}^{(j)}\left(\sum_{\ell=N_{j-1}+1}^{N_j}  c_\ell \eigfm^{(j)}(\vj_{k_j})\right),
\end{aligned}
\]
thus completing the proof.
\end{proof}

\subsection{Computational Complexity Analysis}
Algorithm~\ref{alg:adft} gives the computational steps for evaluating $(\Phi^T f)_{\ell}$, $\ell=1,\ldots,N$ in Theorem~\ref{thm:adft}.  In the first step of Algorithm~\ref{alg:adft}, the total number of summations to compute all elements of Step~1 is no more than $\sum_{i=0}^{j-1} N_{i+1}$;
In the second step, the total number of multiplication and summation operations is at most $2\sum_{\ell=1}^{N}C=\mathcal{O}(N)$. Here $C$ is the constant which bounds the number of distinct values of the Haar basis (see Proposition~\ref{prop:spochaar}). Thus, the total computational cost of Algorithm~\ref{alg:adft} is $\mathcal{O}(N)$.

\medskip
\IncMargin{1em}
\begin{algorithm}[H]
\SetKwData{step}{Step}
\SetKwInOut{Input}{Input}\SetKwInOut{Output}{Output}
\BlankLine
\Input{A real-valued vector $f=(f_1,\dots,f_N)$ on the graph $\gph$; the Haar basis $\{\eigfm\}_{\ell=1}^{N}$ for $l_2(\gph)$ with the chain $\gph_{J\to J_0}$ and the associated basis $\{\eigfm^{(j)}\}_{\ell=1}^{N_{j}}$ for $l_2(\gph_j)$.}
\Output{The vector $\Phi^T f$ by adjoint Haar transform in \eqref{eq:adft} under the basis $\{\eigfm\}_{\ell=1}^{N}$.}
\begin{enumerate}
\item Evaluate the following sums for $j=J_0,\ldots,J-1$ in \eqref{eq:ws1} and \eqref{eq:wsj}.
\begin{equation*}
	\ws\bigl(f,\vj_k\bigr),\quad \vj_k \in V_j.
\end{equation*}

\item
     For each $\ell$, let $j$ be the integer such that $N_{j-1}+1\leq\ell\leq N_j$, where $N_{J_0-1}:=0$. Evaluating
     $\sum_{k=1}^{N_j} \ws(f,\vj_k)\wj_k\eigfm^{(j)}(\vj_k)$
      in \eqref{eq:adftbyws} by the following two steps.\\[2mm]
(a)~Compute the product for all $\vj_k\in V_j$:\\[1mm]
$
	\hspace{0.7cm}T_\ell(f,\vj_k)=\ws(f,\vj_k)\wj_k\eigfm^{(j)}(\vj_k).
$\\[2mm]
(b)~Evaluate sum $\sum_{k=1}^{N_j}T_\ell(f,\vj_k)$.
\end{enumerate}
\caption{Fast Haar Transforms: Adjoint}\label{alg:adft}
\end{algorithm}
\medskip

By Theorem~\ref{thm:fdft}, the evaluation of the forward Haar transform $\Phi c$ can be implemented by Algorithm~\ref{alg:dft}.
In the first step of Algorithm~\ref{alg:dft}, the number of multiplications is no more than $\sum_{\ell=1}^{N}C=\mathcal{O}(N)$; in the second step, the number of summations is no more than $\sum_{\ell=1}^{N} C = \mathcal{O}(N)$; in the third step, the computational steps are $\mathcal{O}(N(\log N)^2)$; in the last step, the total number of summations and multiplications is $\mathcal{O}(N\log N)$.
Thus, the total computational cost of Algorithm~\ref{alg:dft} is $\mathcal{O}(N(\log N)^2)$.

Hence, Algorithms~\ref{alg:adft} and \ref{alg:dft} have linear computational cost (up to a $\log N$ term). We call these two algorithms \emph{fast Haar transforms (FHTs)} under Haar basis on the graph.

\begin{proposition}\label{prop:signalrecover} The adjoint and forward Haar Transforms in Algorithms~\ref{alg:adft} and \ref{alg:dft} are invertible in that for any vector $f$ on graph $\gph$,
\begin{equation*}
	f = \Phi (\Phi^T f).
\end{equation*}
\end{proposition}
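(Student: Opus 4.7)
The plan is to reduce the claim to the orthonormality of the Haar basis, which has already been established in Proposition \ref{prop:orthogonality}. Once the fast algorithms are shown to produce precisely the vectors $\Phi^T f$ and $\Phi c$ defined in \eqref{eq:adft} and \eqref{eq:dft}, the identity $f=\Phi(\Phi^T f)$ becomes the purely linear-algebraic statement $\Phi\Phi^T=I_N$, which is just the completeness relation for an $N\times N$ orthonormal basis of $l_2(\gph)$.

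First I would invoke Theorems \ref{thm:adft} and \ref{thm:fdft} to identify the outputs of Algorithms \ref{alg:adft} and \ref{alg:dft} with $\Phi^T f$ and $\Phi c$, respectively; this is exactly what those theorems assert, so no further correctness argument about the recursive weighted-sum implementation is needed here. In particular, for $c:=\Phi^T f$ produced by Algorithm \ref{alg:adft}, applying Algorithm \ref{alg:dft} to $c$ yields the vector whose $v$-component is $\sum_{\ell=1}^N \phi_\ell(v)\,c_\ell$.

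Next I would unfold this expression using $c_\ell=\sum_{v'\in V}\phi_\ell(v')f(v')$ and swap the order of summation:
\[
(\Phi(\Phi^T f))(v)\;=\;\sum_{\ell=1}^N \phi_\ell(v)\sum_{v'\in V}\phi_\ell(v')f(v')\;=\;\sum_{v'\in V}f(v')\sum_{\ell=1}^N \phi_\ell(v)\phi_\ell(v').
\]
By Proposition \ref{prop:orthogonality}, the columns $\phi_1,\dots,\phi_N$ of $\Phi$ form an orthonormal basis of the $N$-dimensional space $l_2(\gph)$, so $\Phi$ is an orthogonal matrix and the inner sum equals $\delta_{v,v'}$. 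Collapsing the sum over $v'$ then gives $(\Phi(\Phi^T f))(v)=f(v)$ for every $v\in V$, as required.

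There is no genuine obstacle in this proof: once the correctness of the two algorithms is imported from Theorems \ref{thm:adft} and \ref{thm:fdft}, the claim is essentially a one-line corollary of Proposition \ref{prop:orthogonality}. The only point that deserves an explicit mention is that one uses completeness of the basis (the fact that $N$ mutually orthonormal vectors automatically span $l_2(\gph)$) and not merely pairwise orthogonality; this is why the construction of the Haar basis was set up so that exactly $N$ vectors are produced at the finest level.
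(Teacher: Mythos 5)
Your argument is correct and is exactly the justification the paper intends: the authors state Proposition~\ref{prop:signalrecover} without an explicit proof, relying implicitly on the orthonormality of $\{\phi_\ell\}_{\ell=1}^N$ from Proposition~\ref{prop:orthogonality}, which for the square matrix $\Phi$ gives $\Phi^T\Phi = I_N$ and hence $\Phi\Phi^T = I_N$. Your explicit remark that one needs completeness (that $N$ orthonormal vectors span $l_2(\gph)$, so the rows of $\Phi$ are orthonormal as well as the columns) is a worthwhile clarification that the paper omits.
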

Proposition~\ref{prop:signalrecover} shows that the forward Haar transform can recover graph data $f$ from the adjoint Haar transform $\Phi^T f$. This means that forward and adjoint Haar transforms have zero-loss in graph data transmission.

Haar convolution, which computational strategy given by Algorithm~\ref{alg:haarconv}, can be evaluated fast by FHTs in Algorithms~\ref{alg:adft} and \ref{alg:dft}. From the above discussion, the total computational cost of Algorithm~\ref{alg:haarconv} is $\mathcal{O}(N(\log N)^2)$. That is, using FHTs, Haar convolution can be evaluated in near linear computational complexity.

\section{Graph neural networks with Haar transforms}\label{sec:hanet}
\subsection{Models}
The Haar convolution in \eqref{defn:graphconv} can be applied to any architecture of graph neural network. For graph classification and graph-based regression tasks, we use the model with convolutional layer consisting of $m$-hidden neutrons and a non-linear activation function $\sigma$ (e.g. ReLU): for $i=1,2\ldots,m$,

\medskip
\IncMargin{1em}
\begin{algorithm}[H]
\caption{Fast Haar Transforms: Forward}
   \label{alg:dft}
\SetKwData{step}{Step}
\SetKwInOut{Input}{Input}\SetKwInOut{Output}{Output}
\BlankLine
\Input{A real-valued vector $c=(c_1,\dots,c_N)$ on graph $\gph$; the Haar basis $\{\eigfm\}_{\ell=1}^{N}$ for $l_2(\gph)$ associated with the chain $\gph_{J\to J_0}$ and the associated orthonormal basis $\{\eigfm^{(j)}\}_{\ell=1}^{N_{j}}$ for $l_2(\gph_j)$.}
\Output{The vector $\Phi c$ by forward Haar transform in \eqref{eq:dft} under the basis $\{\eigfm\}_{\ell=1}^{N}$.}
\begin{enumerate}
\item For each $\ell$, let $j$ be the integer such that $N_{j-1}+1\leq\ell\leq N_j$, where $N_{J_0-1}:=0$.\\
For all $k=1,\dots,N_j$, compute the product\\[1mm]
\hspace{0.7cm}$
	t_\ell(c,\vj_k):=c_\ell \eigfm^{(j)}(\vj_{k}).
$
\item For each $j=J_0,\ldots,J$, evaluate the sums\\[1mm]
\hspace{0.7cm}$
		s(c,\vj_{k_j}):=\sum_{\ell=N_{j-1}+1}^{N_j}t_\ell(c,\vj_{k_j}).
$
\item Compute the $W_{k}^{(j)}$ for $k=1,\dots,N$ and $j=J_0,\dots,J-1$ by \eqref{eq:Wkj}.
\item Compute the weighted sum\\[1mm]
\hspace{0.7cm}$
		(\Phi c)_k =\sum_{j=J_0}^{J}W_{k}^{(j)} s(c,\vj_{k_j}), \quad k=1,\dots,N.
$
\end{enumerate}
\end{algorithm}
\medskip
\IncMargin{1em}
\begin{algorithm}[H]
\caption{Fast Haar Convolution}\label{alg:haarconv}
\SetKwData{step}{Step}
\SetKwInOut{Input}{Input}\SetKwInOut{Output}{Output}
\BlankLine   
\Input{Real-valued vectors $g:=(g_1,\dots,g_N)$ and $f:=(f_1,\dots,f_N)$ on $\gph$; chain $\gph_{J_0\to J}$ of graph $\gph$ where $\gph_{J}:=\gph$.}
\Output{Haar convolution $g\star f$ of $g$ and $f$ as given by Definition~\ref{defn:graphconv}.}
\begin{enumerate}
	\item Compute the adjoint Haar transforms $\Phi^T g$ and $\Phi^T f$ by Algorithm~\ref{alg:adft}.
	\item Compute the point-wise product of $\Phi^T g$ and $\Phi^T f$.
	\item Compute the forward Haar transform of $(\Phi^T g)\odot (\Phi^T f)$ by Algorithm~\ref{alg:dft}.
\end{enumerate}
\end{algorithm}
\begin{align}\label{eq:gnn}
	f^{\rm out}_{i} &=\sigma \left(\sum_{j=1}^{d}\Phi \bigl(g_{i,j}\odot (\Phi^T f^{\rm in}_{j})\bigr)\right)\notag\\
	&=\sigma \left(\sum_{j=1}^{d}\Phi G_{i,j} \Phi^T f^{\rm in}_{j}\right),
\end{align}
for input graph data $F^{\rm in}=(f^{\rm in}_1,f^{\rm in}_2,\ldots,f^{\rm in}_d)\in R^{N\times d}$ with $N$ nodes and $d$ input features (for each vertex). Here, the feature $f^{\rm in}_j$ of the input graph data is convolved with the learnable filter $g_{i,j}\in \R^N$ by Haar transforms, and then all Haar-transformed features are fused as a new feature $f^{\rm out}_i$. This gives the output matrix $F^{\rm out}=(f^{\rm out}_1,f^{\rm out}_2,\ldots,f^{\rm out}_m)\in \R^{N\times m}$. If we write $G_{i,j}\in \R^{N\times N}$ as the diagonal matrix of filter $g_{i,j}$, the convolutional layer has the compact form of the second equality in \eqref{eq:gnn}.
We call the GNN with Haar convolution in \eqref{eq:gnn} \emph{HANet}.

\textbf{Weight detaching.} For each layer, $O(Ndm)$ parameters need to be tuned. To reduce the number of parameters, we can replace the filter matrix $G_{i,j}$ by a unified diagonal filter matrix $G$ and a compression matrix $W\in R^{d\times m}$ (which is a detaching approach used in conventional CNN for extracting features). This then leads to a concise form
\begin{equation}\label{gnn_haar_eq}
{F^{\rm out}}= \sigma\left(\Phi\big(G(\Phi^TF^{\rm in})\big)W\right).
\end{equation}
Then, it requires $\mathcal{O}(N+dm)$ parameters to train.
Recall that constructing the Haar basis uses a chain $\gph_{J\to J_0}$ for the graph $\gph$, one can implement weight sharing based on the same chain structure. Specifically, one can use $k$-means clustering algorithm \cite{Lloyd1982} or METIS algorithm \cite{KaKu1998} to generate a chain, which captures clustering  feature of the graph. Suppose a coarser level $J_1$ ($J_0\leq J_1<J$) having $K$ clusters, then all vertices in the same cluster share the common filter parameter. The corresponding children vertices in level $J_1-1$ share the same filter parameters as used in their parent vertices, and the bottom level corresponds to the whole set of vertices of the input graph. Thus, the number of parameters is reduced to $\mathcal{O}(K+dm)$.

The HANet uses $d$ times fast Haar convolutions (consisting of $d$-times adjoint and forward Haar transforms). The computational cost of Haar convolution in HANet is then $\mathcal{O}(N(\log N)^2d)$. Deep GNNs with Haar convolution are built by stacking up multiple Haar convolutional layers of \eqref{gnn_haar_eq}, followed by an output layer.
\begin{figure}[ht]
\vskip 0.2in
\centering
\begin{minipage}{1\textwidth}
\includegraphics[width=\columnwidth]{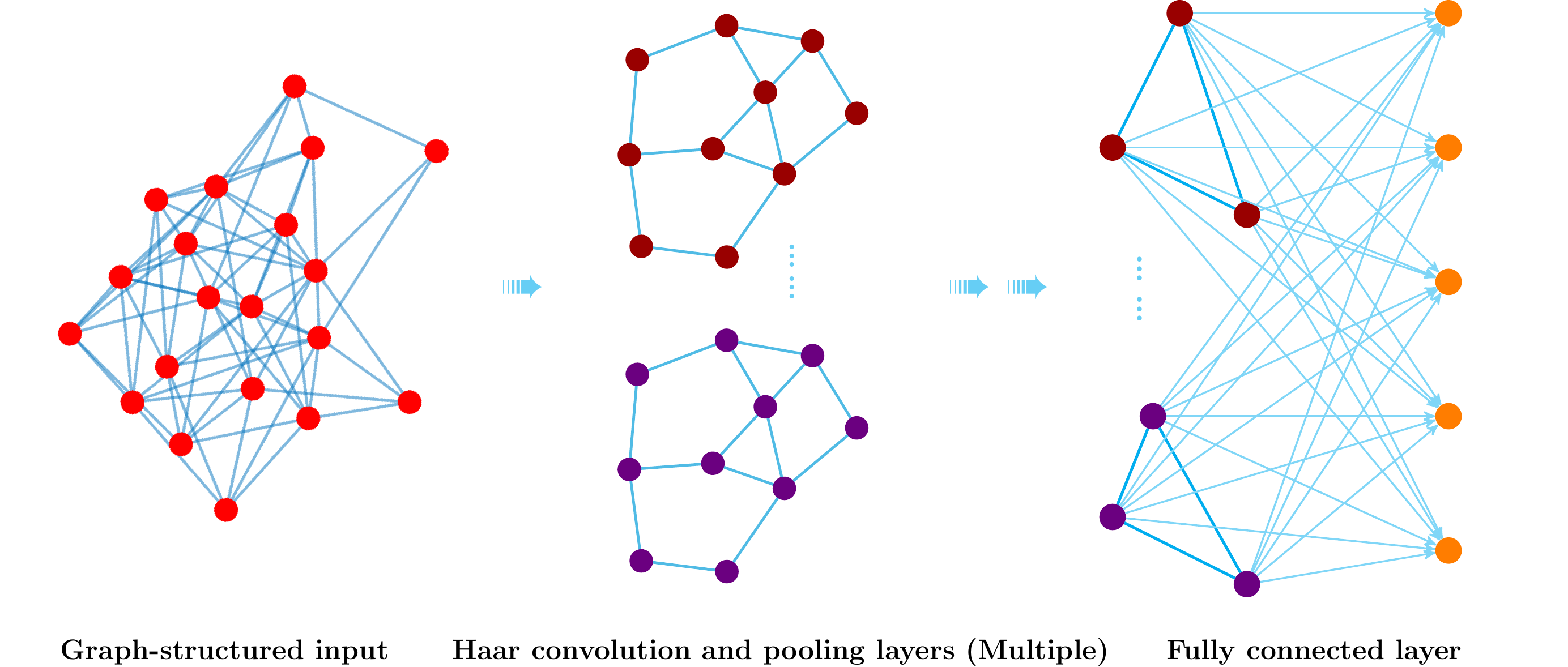}
\end{minipage}
\vskip 2mm
\begin{minipage}{0.85\textwidth}
\centering
\caption{Network architecture of HANet with multiple Haar convolutional layers and then fully connected by softmax.}\label{fig:topohanet}
\end{minipage}
\end{figure}

\begin{figure}
\centering
\begin{minipage}{0.8\columnwidth}
\includegraphics[width=\columnwidth]{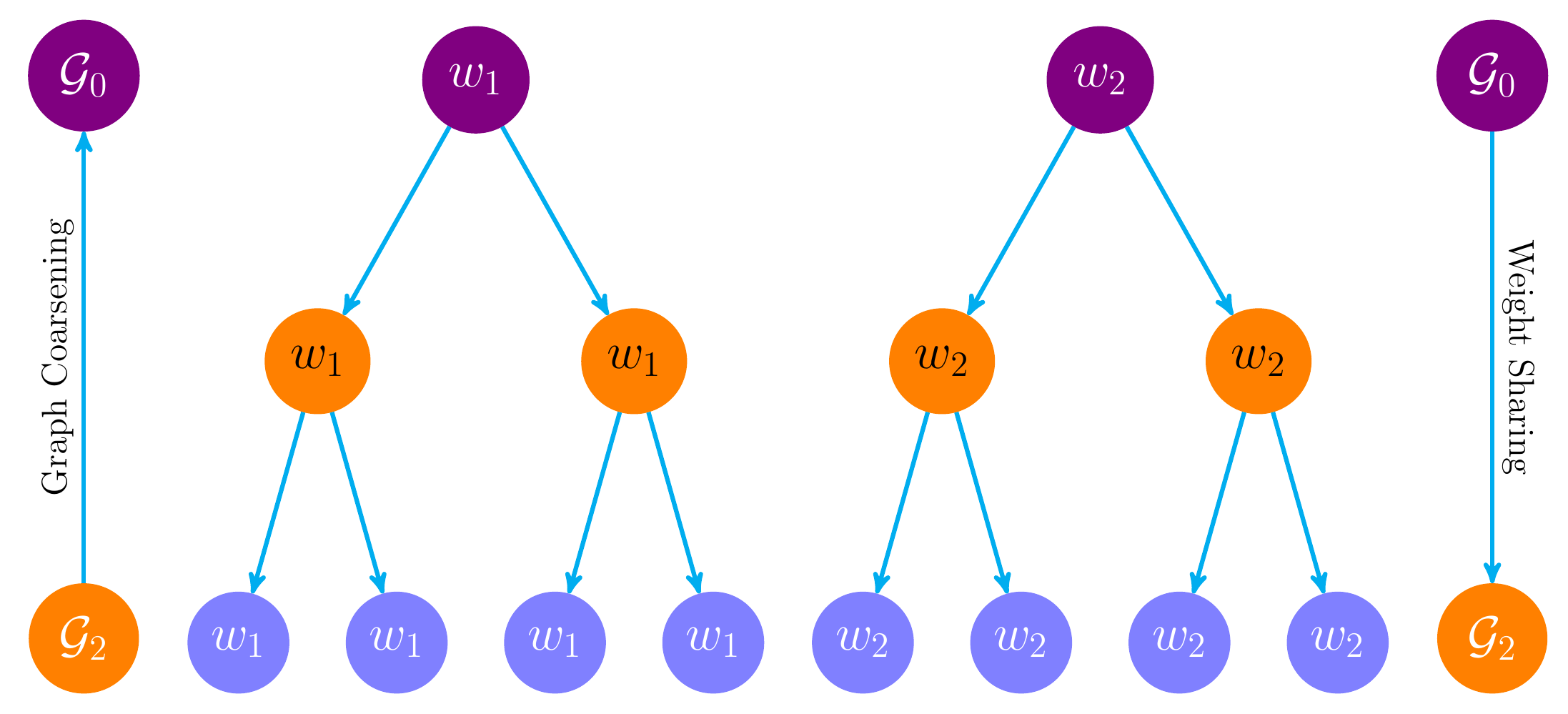}
\end{minipage}
\vskip 2mm
\begin{minipage}{0.85\textwidth}
\centering
\caption{Weight sharing for Haar convolution and graph coarsening for graph pooling for the chain $\gph_{2\to 0}$.}\label{fig:weightshare_graphcoarsen}
\end{minipage}
\end{figure}

\textbf{HANet for graph classification and graph-based regression.} These two tasks can be formulated as the following supervised learning: Given a collection of graph-structured data $\{f_i\}_{i=1}^{n}$ with labels $\{y_i\}_{i=1}^{n}$, the classification task is to find a mapping that can classify or regress labels.
The model of HANet uses a similar architecture of canonical deep convolutional neural network: it has several convolutional layers and fully connected dense layers but the convolutional layer uses Haar convolution.
Figure~\ref{fig:topohanet}a shows the flowchat for the architecture of HANet with multiple Haar convolutional layers:
the chain $\gph_{J\to J_0}$ and the Haar basis $\eigfm$ and the associated basis $\eigfm^{(j)}$, $j=J_0,\dots,J$ are pre-computed; graph-structured input $f$ is Haar-convoluted with filter $g$ which is of length $N_J$ but with $N_{J-1}$ independent parameters, where $g$ is expanded from level $J-1$ to $J$ by weight sharing, and the output $f^{\rm out}$ of the first layer is the ReLU of the Haar convolution of $g$ and $f$; the graph pooling reduces $f^{\rm out}$ of size $N_{J}$ to $\widetilde{f}^{\rm in}$ of size $N_{J-1}$; and in the second Haar convolutional layer, the input is $\widetilde{f}^{\rm in}$ and the Haar basis is $\eigfm^{(J-1)}$; the following layers continue this process; the final Haar convolutional layer is fully connected by one or multiple dense layers. For classification, an additional dense layer with softmax function is used.

\textbf{HANet for node classification.} In node classification, the whole graph is the only single input data, where a fractional proportion of nodes are labeled. The output is the whole graph with all unknown labels predicted. Here we use the following GNN with two layers.
\begin{equation}\label{eq:gnn1}
	\hbox{HANet}(f^{\rm in})
	:= {\rm softmax}\Bigl({\rm HC}^{(2)}\bigl({\rm ReLU}\bigl({\rm HC}^{(1)}\bigl(f^{\rm in}\bigr)\bigr)\bigr)\Bigr)
\end{equation}
where ${\rm HC}^{(1)}$ and ${\rm HC}^{(2)}$ are the Haar convolutional layers
\begin{equation*}
	 {\rm HC}^{(i)}(f) := \widehat{A}(w_1^{(i)}\star f) w_2^{(i)},\quad i=1,2,
\end{equation*}
where we use the modified Haar convolution $w_1^{(i)}\star f=\Phi\bigl(w_1^{(i)}\odot(\Phi^{T}f)\bigr)$.
For a graph with $N$ nodes and $M$ features, in the first Haar convolutional layer, the filter $w^{(1)}_1$ contains $N_0\times M$ parameters and is extended to a matrix $N\times M$ by weight sharing, where $N_0$ is the number of nodes at the coarsest level. The $w^{(1)}_2$
plays the role of weight compression and feature extraction. The first layer is activated by the rectifier and the second layer is fully connected with softmax.
The $\widehat{A}$, which is defined in \cite{KiWe2017}, is the square matrix of size $N$ determined by the adjacency matrix of the input graph. This smoothing operation compensates the information loss in coarsening by taking a weighted average of features of each vertex and its neighbours. For vertices that are densely connected, it makes their features more similar and significantly improves the ease of node classification task \cite{LiHaWu2018}.

\subsection{Technical Components}\label{sec:technical}
\textbf{Fast computation for HANet.} Complexity analysis of FHTs above shows that HANet is more efficient than GNNs with graph Fourier basis. The graph convolution of the latter incurs $\mathcal{O}(N^3)$ computational cost. Many methods have been proposed to improve the computational performance for graph convolution. For example, ChebNet \cite{DeBrVa2016} and GCN \cite{KiWe2017} use localized polynomial approximation for the spectral filters; GWNN \cite{GWNN} constructs sparse and localized graph wavelet basis matrix for graph convolution. These methods implement the multiplication between a sparse matrix (e.g. the refined adjacency matrix $\hat{A}$ in GCN or the wavelet basis matrix $\psi_{s}$ in GWNN \cite{GWNN}) and input matrix $F$ in the convolutional layer. However, to compute either $\hat{A}F$ or $\psi_{s}F$, the computational complexity, which is roughly proportional to $\mathcal{O}(\varepsilon N^2d)$, to a great extent relies on the sparse degree of $\hat{A}$ or $\psi_{s}$, where $\varepsilon$, $\varepsilon\in[0,1]$, represents the percentage of non-zero elements in a square matrix. The $\mathcal{O}(\varepsilon N^2d)$ may be significantly higher than $\mathcal{O}(N(\log N)^2d)$ as long as $\varepsilon$ is not extremely small, indicating that our FHTs outperform these methods especially when $N$ is quite large and $\varepsilon\approx1$. In addition, the fast computation for sparse matrix multiplication (see \cite{GoVa2012}) can further speed up the evaluation of Haar convolution. HANet with sparse FHTs can be developed by using the strategy in \cite{HaInKaPr2012simple,InKaPr2014}.

\textbf{Chain.} In HANet, the chain and the Haar basis can be pre-computed since the graph structure is already known. In particular, the chain is computed by a modified version of the METIS algorithm \cite{KaKu1998}, which fast generates a chain for the weight matrix of a graph. In many cases, the parents of a chain from METIS have at least two children, and then the weighted chain is a filtration and Proposition~\ref{prop:spochaar} applies.

\textbf{Weight sharing for filter.} In the HANet, one can use weight sharing given in Section~\ref{sec:haarconv} for filters. By doing this, we exploit the local topological property of the graph-structured data to extract the common feature of neighbour nodes and meanwhile reduce the independent parameters of the filter. Weight sharing can be added in each convolutional layer of HANet. For chain $\gph_{J\to J_0}$ with which the Haar basis is associated, weight sharing can act from the coarsest level $J_0$ to the finest level $J$ or from any level coarser than $J$ to $J$. For a filtration, the weight sharing shrinks the number of parameters by at least rate $2^{-(J-J_0)}$, see Figure~\ref{fig:topohanet}b.

\textbf{Graph pooling.} We use max graph pooling between two convolutional layers of HANet. Each pooled input is the maximum over children nodes of each node of the current layer of the chain. The pooling uses the same chain as the Haar basis at the same layer. For example, after pooling, the second layer uses the chain $\gph_{(J-1)\to J_0}$, as illustrated in Figure~\ref{fig:topohanet}. By the construction of Haar basis in Section~\ref{subsec:haarbasis}, the new Haar basis associated with $\gph_{(J-1)\to J_0}$ is exactly the pre-computed basis $\{\eigfm^{(J-1)}\}_{\ell=1}^{N_{J-1}}$.

\section{Experiments}\label{sec:experiments}
In this section, we test the proposed HANet on Quantum Chemistry (graph-based regression) and Citation Networks (node classification). The experiments for graph classification were carried out under the Google Colab environment with Tesla K80 GPU while for node classification were under the UNIX environment with a 3.3GHz Intel Core i7 CPU and 16GB RAM. All the methods were implemented in TensorFlow. SGD+Momentum and Adam optimization methods were used in the experiments.


\subsection{Quantum Chemistry for Graph-based Regression}
We test HANet on QM7 \cite{BlRe2009,RuTkMuLi2012}, which contains $7165$ molecules. Each molecule is represented by the Coulomb (energy) matrix and its atomization energy. We treat each molecule as a weighted graph where the nodes are the atoms and the adjacency matrix is the $23\times23$-Coulomb matrix of the molecule, where the true number of atoms may be less than $23$. The atomization energy of the molecule is the label. As in most cases the adjacency matrix is not fully ranked, we take the average of the Coulomb matrices of all molecules as the common adjacency matrix, for which we generate the Haar basis. To avoid exploding gradients in parameter optimization, we take the standard score of each entry over all Coulomb matrices as input.
\begin{table}[ht]
\caption{Test mean absolute error (MAE) comparison on QM7}\label{tab:qm7_results}
\vspace{-5mm}
\begin{center}
\begin{small}
\begin{tabular}{c|c}
\toprule
Method & Test MAE  \\
\midrule
  RF \cite{Breiman2001} & $122.7\pm4.2$ \\
  Multitask \cite{Ramsundar_etal2015} & $123.7\pm15.6$ \\
  KRR \cite{CoVa1995}  & $110.3\pm4.7$ \\
  GC \cite{AlRaPaPa2017} & $77.9\pm2.1$ \\
  Multitask(CM) \cite{Wu_etal2018}  & $10.8\pm1.3$ \\
  KRR(CM) \cite{Wu_etal2018} & $10.2\pm0.3$ \\
  DTNN \cite{Schutt_etal2017}& $8.8\pm3.5$ \\
  ANI-1 \cite{SmIsRo2017}& $2.86\pm0.25$\\
  \midrule
  \textbf{HANet} & $\boldsymbol{9.50\pm0.71}$ \\
\bottomrule
\end{tabular}
\end{small}
\end{center}
\end{table}


\begin{figure*}[ht]
\centering
\begin{minipage}{\columnwidth}
\begin{minipage}{0.32\columnwidth}
\centering
\vspace{-1mm}
	\includegraphics[width=0.8\columnwidth]{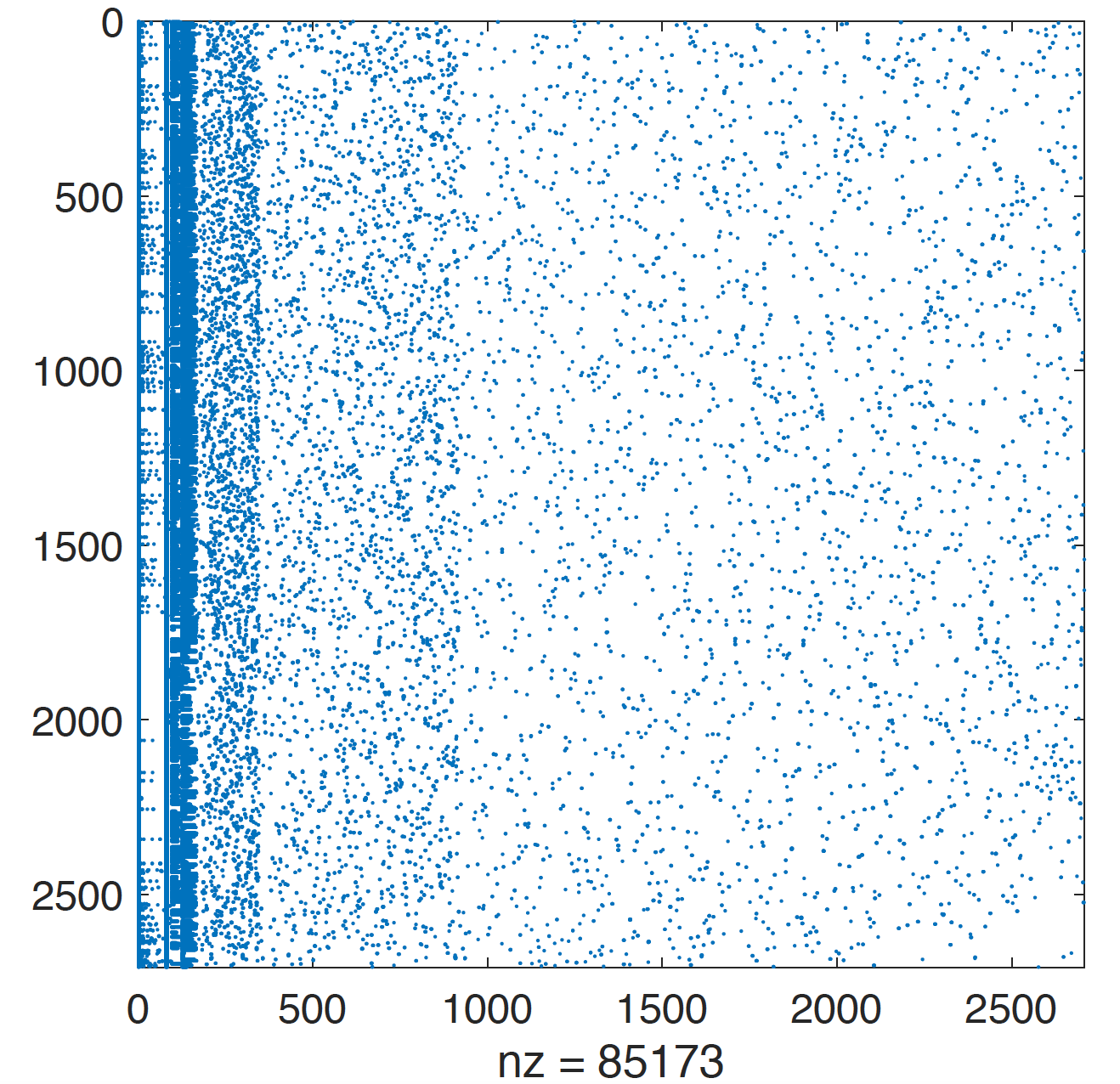}\\[-2mm]
	{\scriptsize(a) Haar transform matrix $\Phi$}
\end{minipage}
	\hskip -0.05in
\begin{minipage}{0.32\columnwidth}
\centering
	\includegraphics[width=\columnwidth]{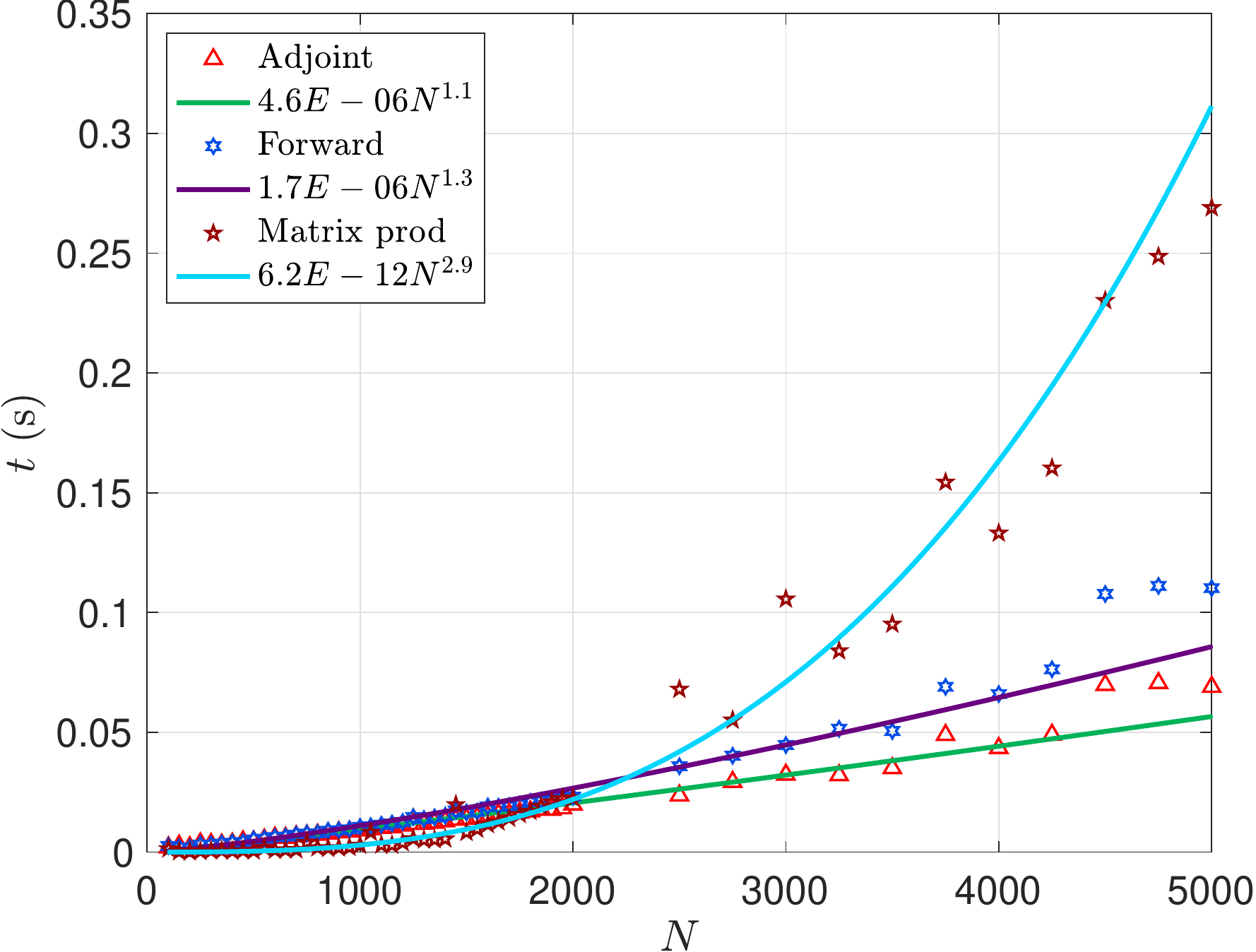}\\[-2mm]
	{\scriptsize(b) CPU time of FFTs}
\end{minipage}
\hskip 0.1in
\begin{minipage}{0.32\columnwidth}
\centering
	\includegraphics[width=0.98\columnwidth]{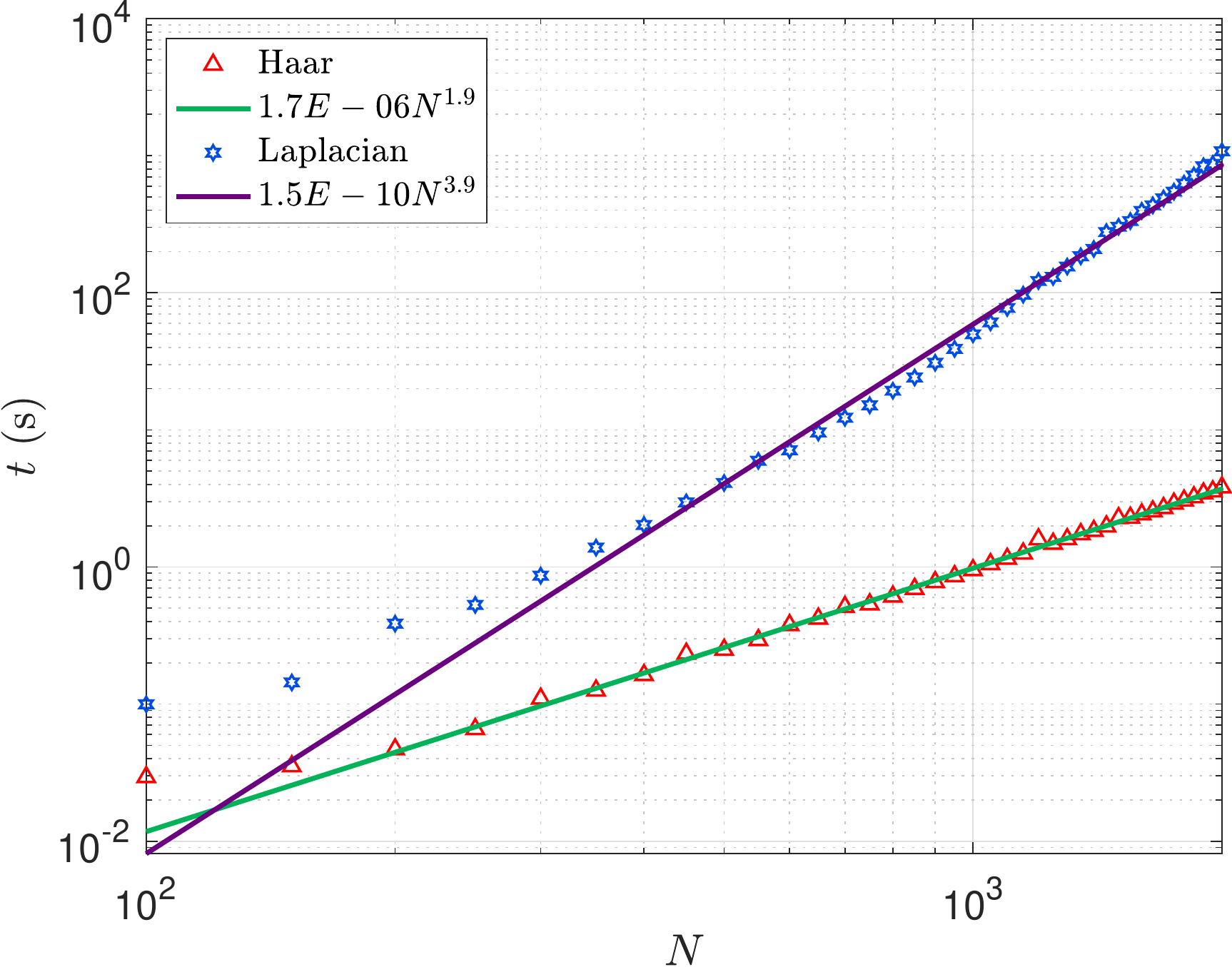}\\[-2mm]
	{\scriptsize(c) CPU time of generating basis}
\end{minipage}
\end{minipage}
\vskip 2mm
\begin{minipage}{0.85\textwidth}
\centering
\caption{(a) Haar basis $\Phi$ for Cora with a chain of $11$ levels (by METIS). Each column is a vector of the Haar basis. The sparsity of the Haar transform matrix is $98.84\%$ (i.e. the proportion of zero entries). (b) Comparison of CPU time for FHTs and Direct Matrix Product for the Haar basis for graphs with nodes $\leq 5,000$. (c) Comparison of CPU time for generating the orthonormal bases for Haar and graph Laplacian on graphs with nodes $\leq 2,000$.}
\label{fig:fft}
\end{minipage}
\end{figure*}

\begin{table*}[htbp!]
\centering
\begin{minipage}{0.85\textwidth}
\centering
\caption{Sparsity of Haar basis and CPU time for basis generating, adjoint FHT (AFHT) and forward FHT (FFHT) on citation networks data set}\label{tab:citationhaar}
\end{minipage}
\vspace{-5mm}
\begin{center}
\begin{small}
\begin{tabular}{c|c|c|c|c|c}
\toprule
Dataset & Basis Size & Sparsity & Generating Time (s) & AFHT Time (s) & FFHT Time (s) \\
\midrule
  Citeseer  & $3327$ & $99.58\%$ & $1.93509$ & $0.05276$ & $0.05450$ \\
  Cora  & $2708$ & $98.84\%$ & $0.86429$ & $0.06908$ & $0.05515$ \\
  Pubmed  & $19717$ & $99.84\%$ & $62.67185$ & $1.08775$ & $1.55694$ \\
\bottomrule
\end{tabular}
\end{small}
\end{center}
\end{table*}

The network architecture of HANet contains $2$ layers of Haar convolution with $8$ and $2$ filters and then $2$ fully connected layers with $400$ and $100$ neurons. As the graph is not big, we do not use graph pooling or weight sharing. Following \cite{Gilmer_etal2017}, we use mean squared error (MSE) plus $\ell_2$ regularization as the loss function in training and mean absolute error (MAE) as the test metric. We repeat the experiment over $5$ splits with the same proportion of training and test data but  with different random seeds. We report the average performance and standard deviation for the HANet in Table~\ref{alg:adft} compared against other public results \cite{Wu_etal2018} by methods Random Forest (RF) \cite{Breiman2001}, Multitask Networks (Multitask) \cite{Ramsundar_etal2015}, Kernel Ridge Regression (KRR) \cite{CoVa1995}, Graph Convolutional models (GC) \cite{AlRaPaPa2017}, Deep Tensor Neural Network (DTNN) \cite{Schutt_etal2017}, ANI-1 \cite{SmIsRo2017}, KRR and Multitask with Coulomb Matrix featurization (KRR(CM)/Multitask(CM)) \cite{Wu_etal2018}. It shows that HANet ranks third in the list with average test MAE $9.50$ and average relative MAE $4.31\times10^{-6}$, which offers a good approximator for QM7 regression.

\subsection{Citation Networks for Node Classification}
\begin{table}[htbp!]
\caption{Test accuracy comparison on citation networks}\label{tab:Com_results}
\vspace{-5mm}
\begin{center}
\begin{small}
\begin{tabular}{c|c|c|c}
\toprule
Method & Citeseer & Cora & Pubmed  \\
\midrule
MLP \cite{KiWe2017}& $55.1$ & $46.5$ & $71.4$\\
  ManiReg \cite{BeNiSi2006} & $60.1$ & $59.5$ & $70.7$  \\
  SemiEmb \cite{WeFrMoCo2012} & $59.6$ & $59.0$ & $71.1$  \\
  LP \cite{ZhGhLa2003} & $45.3$ & $68.0$ & $63.0$  \\
  DeepWalk \cite{Perozzi_etal2014}& $43.2$ & $67.2$ & $65.3$\\
  ICA \cite{LuGe2003}& 69.1 & $75.1$ & $73.9$\\
  Planetoid \cite{YaCoSa2016}& $64.7$ & $75.7$ & $77.2$ \\
  ChebNet \cite{DeBrVa2016}& $69.8$ & $81.2$ & $74.4$\\
  GCN \cite{KiWe2017} & $70.3$ & $81.5$ & $79.0$  \\
  \midrule
  \textbf{HANet} & $\boldsymbol{70.1}$ & $\boldsymbol{81.9}$ & $\boldsymbol{79.3}$  \\
\bottomrule
\end{tabular}
\end{small}
\end{center}
\end{table}


We test the model in \eqref{eq:gnn1} on citation networks Citeseer, Cora and Pubmed~\cite{Sen_etal2008}, following the experimental setup of \cite{YaCoSa2016,KiWe2017}. The Citeseer, Cora and Pubmed are $6$, $7$ and $3$ classification problems with nodes $3327$, $2708$ and $19717$, edges $4732$, $5429$ and $44338$, features $3703$, $1433$ and $500$, and label rates $0.036$, $0.052$ and $0.003$ respectively. In Table~\ref{tab:Com_results}, we compare the performance of the model \eqref{eq:gnn1} of HANet with methods Multilayer Perceptron (MLP), Manifold Regularization (ManiReg)~\cite{BeNiSi2006}, Semi-supervised Embedding (SemiEmb) \cite{WeFrMoCo2012}, Traditional Label Propagation (LP) \cite{ZhGhLa2003}, DeepWalk \cite{Perozzi_etal2014}, Link-based Classification (ICA) \cite{LuGe2003}, Planetoid \cite{YaCoSa2016}, ChebNet \cite{DeBrVa2016} and GCN~\cite{KiWe2017}. We repeat the experiment $10$ times with different random seeds and report the average test accuracy of HANet. As shown in Table~\ref{tab:Com_results}, HANet has the top test accuracies on Cora and Pubmed and ranks second on Citeseer.

\medskip
\begin{figure}[h]
\centering
\begin{minipage}{0.7\columnwidth}
\begin{minipage}{0.92\columnwidth}
	\includegraphics[width=\columnwidth]{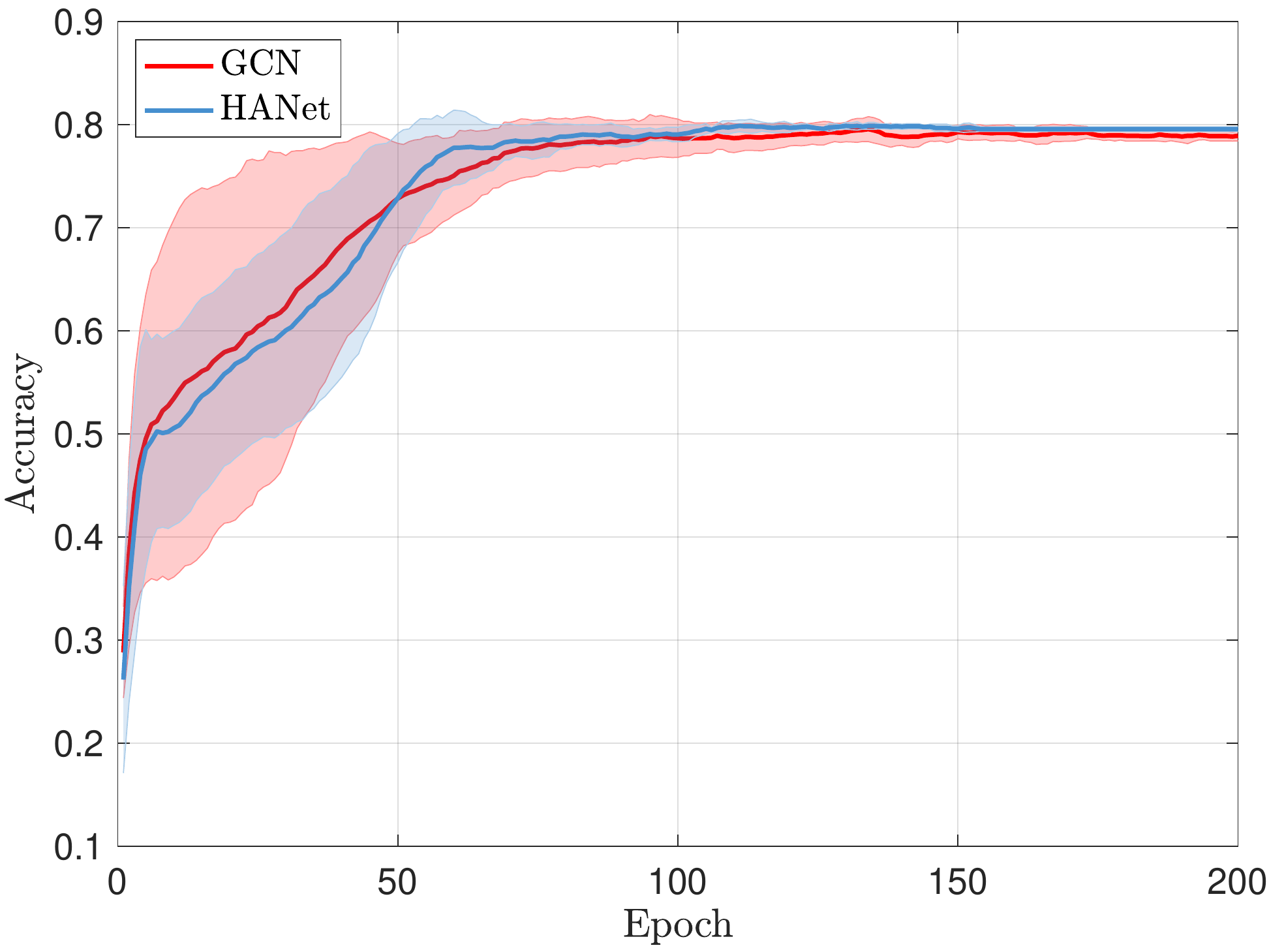}
\end{minipage}
	\hskip -0.5\columnwidth
\begin{minipage}{0.46\columnwidth}
\vskip 5mm
	\includegraphics[width=\columnwidth]{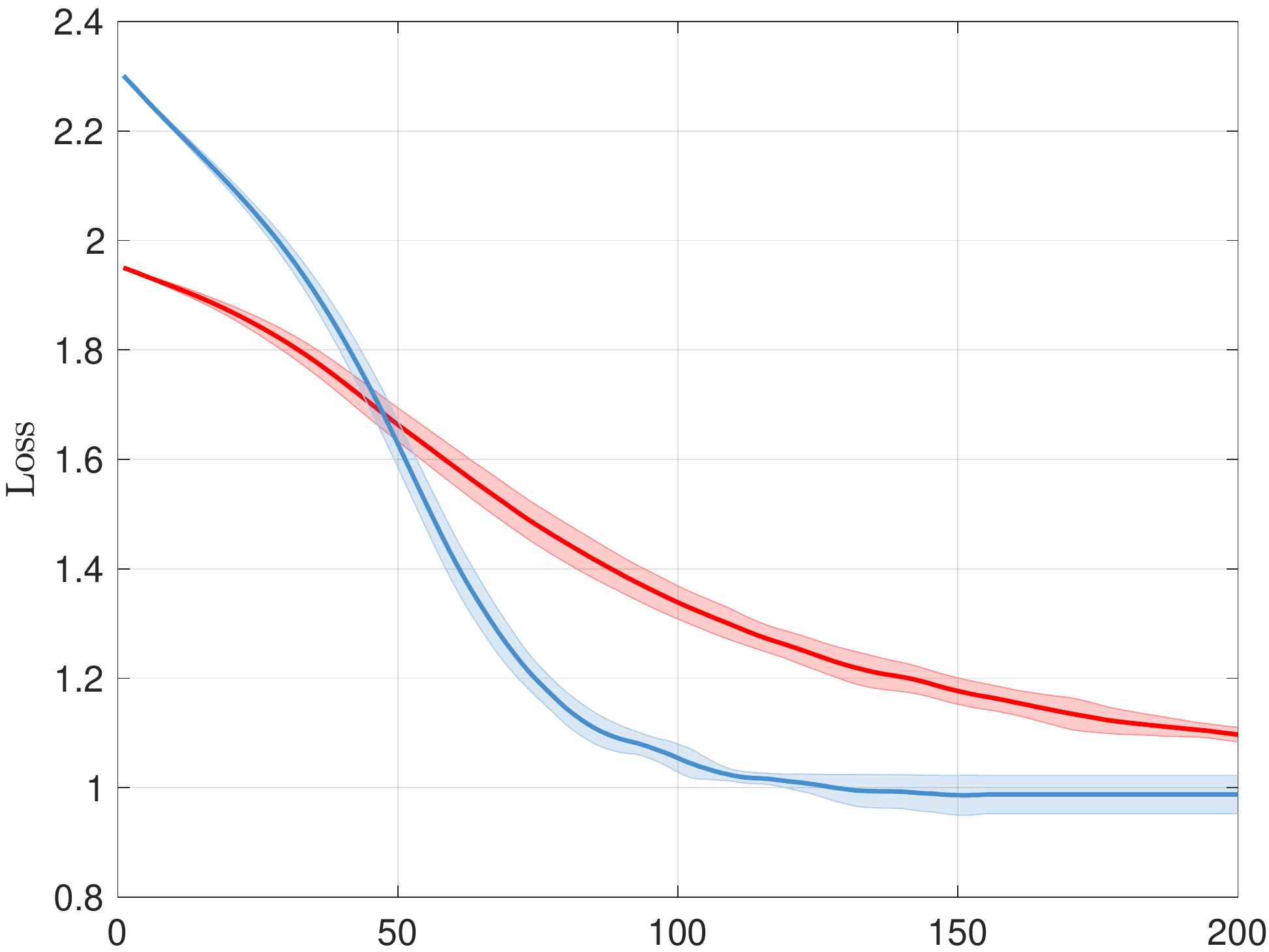}
\end{minipage}
\end{minipage}
\vskip 2mm
\begin{minipage}{0.85\textwidth}
\centering
\caption{Main figure: Mean and standard deviation of validation accuracies of HANet and GCN on Cora with epoch $\leq 200$. Figure in the lower right corner: Validation loss function of HANet and GCN.}
\label{fig:meanstd.hanetgcn}
\end{minipage}
\end{figure}
Figure~\ref{fig:meanstd.hanetgcn} shows the mean and standard deviation of validation accuracies and the validation loss with up to epoch $200$ of HANet and GCN. HANet achieves slightly higher max accuracy as well as smaller standard deviation, and the loss also converges faster than GCN.

\subsection{Haar Basis and FHTs}\label{subsec:experimentFHT}
In Figure~\ref{fig:fft}a, we show the matrix of the Haar basis vectors for Cora, which has sparsity (i.e. the proportion of zero entries) $98.84\%$. The associated chain $\gph_{10\to0}$ has $2708$, $928$, $352$, $172$, $83$, $41$, $20$, $10$, $5$, $2$, $1$ nodes from level $10$ to $0$. Figure~\ref{fig:fft}b shows the comparison of time for FHTs with direct matrix product. It illustrates that FHTs have nearly linear computational cost while the cost of matrix product grows at $\mathcal{O}(N^3)$ for a graph of size $N$. Figure~\ref{fig:fft}c shows the comparison of time for generating the Haar basis and the basis for graph Laplacian: Haar basis needs significantly less time than that for graph Laplacian. Table~\ref{tab:citationhaar} gives the sparsity (i.e. the proportion of zero entries) and the CPU time for generating Haar basis and FHTs on three datasets.
All sparsity values for three datasets are very high (around $99\%$), and the computational cost of FHTs is proportional to $N$.

\section{Conclusion}
We introduce Haar basis and Haar transforms on a coarse-grained chain on the graph. From Haar transforms, we define Haar convolution for GNNs, which has a fast implementation in view of the sparsity of the Haar transform matrix. Haar convolution gives a sparse representation of graph data and captures the geometric property of the graph data, and thus provides an effective graph convolution for any architecture of GNN.

\section*{Acknowledgements}
Ming Li acknowledges support by the National Natural Science Foundation of China under Grant 61802132 and Grant 61877020, and the China Post-Doctoral Science Foundation under Grant 2019T120737.
Yu Guang Wang acknowledges support from the Australian Research Council under Discovery Project DP180100506.
This work is supported by the National Science Foundation under Grant No.~DMS-1439786 while Zheng Ma and Yu Guang Wang were in residence at the Institute for Computational and Experimental Research in Mathematics in Providence, RI, during Collaborate@ICERM 2019.
Xiaosheng Zhuang acknowledges support by Research Grants Council of Hong Kong (Project No. CityU 11301419).

\section*{References}
\bibliographystyle{apa} 
\bibliography{references}

\end{document}